\newtheorem{theorem}{Theorem}[section]
\newtheorem{proposition}{Proposition}[section]
\newtheorem{lemma}{Lemma}[section]
\newtheorem{remark}{Remark}[section]
\newcommand{\reals}{\mathbb{R}}
\newcommand{\E}{\mathbb{E}}
\newcommand{\bx}{\mathbf{x}}
\newcommand{\bw}{\mathbf{w}}
\newcommand{\Fcal}{\mathcal{F}}
\newcommand{\norm}[1]{\|#1\|}
\newcommand{\inner}[1]{\langle#1\rangle}
\newtheorem{claim}{Claim}
\renewcommand{\eqref}[1]{Eq.~(\ref{#1})}
\newcommand\revision[1]{\textcolor{black}{#1}}
\title{On the Power and Limitations of Random Features\\ for Understanding Neural Networks}
\author{Gilad Yehudai\qquad Ohad Shamir\\
Weizmann Institute of Science\\
\texttt{\{gilad.yehudai,ohad.shamir\}@weizmann.ac.il}
}
\date{}
\begin{document}

\maketitle

\begin{abstract}
Recently, a spate of papers have provided positive theoretical results for training over-parameterized neural networks (where the network size is larger than what is needed to achieve low error). The key insight is that with sufficient over-parameterization, gradient-based methods will implicitly leave some components of the network relatively unchanged, so the optimization dynamics will behave as if those components are essentially fixed at their initial random values. In fact, fixing these \emph{explicitly} leads to the well-known approach of learning with random features (e.g. \citep{rahimi2008random,rahimi2009weighted}). In other words, these techniques imply that we can successfully learn with neural networks, whenever we can successfully learn with random features. In this paper, we formalize the link between existing results and random features, and argue that despite the impressive positive results, random feature approaches are also inherently limited in what they can explain. In particular, we prove that under minimal assumptions, random features cannot be used to learn \emph{even a single ReLU neuron} (over standard Gaussian inputs in $\reals^d$ and $\text{poly}(d)$ weights), unless the network size (or magnitude of its weights) is exponentially large in $d$. Since a single neuron with Gaussian inputs \emph{is} known to be learnable with gradient-based methods, we conclude that we are still far from a satisfying general explanation for the empirical success of neural networks. For completeness we also provide a simple self-contained proof, using a random features technique, that one-hidden-layer neural networks can learn low-degree polynomials.
\end{abstract}

\section{Introduction}
Deep learning, in the form of artificial neural networks, has seen a dramatic resurgence in popularity in recent years. This is mainly due to impressive performance gains on various difficult learning problems, in fields such as computer vision, natural language processing and many others. Despite the practical success of neural networks, our theoretical understanding of them is still very incomplete. 

A key aspect of modern networks is that they tend to be very large, usually with many more parameters than the size of the training data: In fact, so many that in principle, they can simply memorize all the training examples (as shown in the influential work of \citet{zhang2016understanding}). The fact that such huge, over-parameterized networks are still able to learn and generalize is one of the big mysteries concerning deep learning. A current leading hypothesis is that over-parameterization makes the optimization landscape more benign, and encourages standard gradient-based training methods to find weight configurations that fit the training data as well as generalize (even though there might be many other configurations which fit the training data without any generalization). However, pinpointing the exact mechanism by which over-parameterization helps is still an open problem.



Recently, a spate of papers (such as \cite{andoni2014learning,daniely2017sgd,du2018gradient,allen2018learning,li2018learning,du2018gradient2,cao2019generalization,allen2018convergence,allen2019can}) provided positive results for training and learning with over-parameterized neural networks. Although they differ in details, they are all based on the following striking observation: When the networks are sufficiently large, standard gradient-based methods change certain components of the network (such as the weights of a certain layer) very slowly, so that if we run these methods for a bounded number of iterations, they might as well be fixed. To give a concrete example, consider one-hidden-layer neural networks, which can be written as a linear combination of $r$ neurons
\begin{equation}\label{eq:network}
N(x) = \sum_{i=1}^r u_i \sigma(\inner{w_i,x}+b_i) ~,
\end{equation}
using weights $\{u_i,w_i,b_i\}_{i=1}^{r}$ and an activation function $\sigma$. When $r$ is sufficiently large, and with standard random initializations, it can be shown that gradient descent will leave the weights $w_i,b_i$ in the first layer nearly unchanged (at least initially). As a result, the dynamics of gradient descent will resemble those where $\{w_i,b_i\}$ are fixed at random initial values -- namely, where we learn a \emph{linear} predictor (parameterized by $u_1,\ldots,u_r$) over a set of $r$ random features of the form $x\mapsto \sigma(\inner{w_i,x}+b_i)$ (for some random choice of $w_i,b_i$). For such linear predictors, it is not difficult to show that they will converge quickly to an optimal predictor (over the span of the random features). This leads to learning guarantees with respect to hypothesis classes which can be captured well by such random features: For example, most papers focus (explicitly or implicitly) on multivariate polynomials with certain constraints on their degree or the magnitude of their coefficients. We discuss these results in more detail (and demonstrate their close connection to random features) in Section \ref{section:analysis of neural networks as random features}. 

Taken together, these results are a significant and insightful advance in our understanding of neural networks: They rigorously establish that sufficient over-parameterization allows us to learn complicated functions, while solving a non-convex optimization problem. However, it is important to realize that this approach can only explain learnability of hypothesis classes which can already be learned using random features. Considering the one-hidden-layer example above, this corresponds to learning linear predictors over a fixed representation (chosen obliviously and  randomly at initialization). Thus, it does not capture any element of \emph{representation learning}, which appears to lend much of the power of modern neural networks. 

In this paper we show that there are inherent limitations on what predictors can be captured with random features, and as a result, on what can be provably learned with neural networks using the techniques described earlier. We consider random features of the form $f_i(x)$ which are chosen from some fixed distribution. The $f_i$'s can be arbitrary functions, including multilayered neural networks and kernel functions, as long as their norm (suitably defined) is not exponential in the input dimension. We show that using $N(x) = \sum_{i=1}^r u_i f_i(x)$ we cannot efficiently approximate \emph{even a single ReLU neuron}: In particular, we can choose a target weight vector $w^*\in \mathbb{R}^d,\norm{\bw^*}=d^2$ and bias term $b^*\in \mathbb{R}$, such that if
    \[
    \E_{\bx}\left[\left(N(x)-[\inner{w^*,x}+b^*]_+\right)^2\right] ~\leq~ \frac{1}{50}
    \]
    (where $[z]_+=\max\{0,z\}$ is the ReLU function and $x$ has a standard Gaussian distribution) then
    \[ r\cdot\max_i |u_i| \geq \exp(\Omega(d)). \]
In other words, either the number of features $r$ or the magnitude of the weights (or both) must be exponential in the dimension $d$. Moreover, if the random features can be written as $x\mapsto f_i(Wx)$ for a random matrix $W$ (which includes for instance vanilla neural networks of any size), then the same result holds for any choice of $w^*$. These results imply that the random features approach cannot fully explain polynomial-time learnability of neural networks, even with respect to data generated by an extremely simple neural network, composed of a single neuron. This is despite the fact that single ReLU neurons with Gaussian inputs \emph{are} easily learnable with gradient-based methods (e.g., \cite{mei2016landscape}, \cite{soltanolkotabi2017learning}\footnote{To be precise, existing theoretical results usually ignore the bias term for simplicity, but we believe this is not a real impediment for the analysis.}). The point we want to make here is that the random feature approach, as a theory for explaining the success of neural networks, cannot explain even the fact that single neurons are learnable.

For completeness we also provide a simple, self-contained analysis, showing how over-parameterized, one-hidden-layer networks can provably learn polynomials with bounded degrees and coefficients, using standard stochastic gradient descent with standard initialization.

We emphasize that there is no contradiction between our positive and negative results: In the positive result on learning polynomials, the required size of the network is exponential in the \emph{degree} of the polynomial, and low-degree polynomials cannot express even a single ReLU neuron if its weights are large enough.

Overall, we argue that although the random feature approach captures important aspects of training neural networks, it is by no means the whole story, and we are still quite far from a satisfying general explanation for the empirical success of neural networks.

\subsection*{Related Work}

The recent literature on the theory of deep learning is too large to be thoroughly described here. Instead, we survey here some of the works most directly relevant to the themes of our paper. In Section \ref{section:analysis of neural networks as random features}, we provide a more technical explanation on the connection of recent results to random features. 

\textbf{The Power of Over-Parameterization.} The fact that over-parameterized networks are easier to train was empirically observed, for example, in \cite{livni2014computational}, and was used in several contexts to show positive results for learning and training neural networks. For example, it is known that adding more neurons makes the optimization landscape more benign (e.g., \cite{safran2016quality,soudry2016no,soudry2017exponentially,safran2017spurious,chizat2018global,soltanolkotabi2019theoretical}), or allows them to learn in various settings (e.g., besides the papers mentioned in the introduction, \cite{brutzkus2017sgd,li2017algorithmic,brutzkus2018over,wang2018learning,du2018power}).

\textbf{Random Features.} The technique of random features was proposed and formally analyzed in \cite{rahimi2008random,rahimi2008uniform,rahimi2009weighted}, originally as a computationally-efficient alternative to kernel methods (although as a heuristic, it can be traced back to the ``random connections'' feature of Rosenblatt's Perceptron machine in the 1950's). These involve learning predictors of the form $x\mapsto \sum_{i=1}^{r}u_i \psi_i(x)$, where $\psi_i$ are random non-linear functions. The training involves only tuning of the $u_i$ weights. Thus, the learning problem is as computationally easy as training linear predictors, but with the advantage that the resulting predictor is non-linear, and in fact, if $r$ is large enough, can capture arbitrarily complex functions. The power of random features to express certain classes of functions has been studied in past years (for example \cite{barron1993universal,rahimi2008uniform,klusowski2018approximation,sun2018random}). However, in our paper we also consider negative rather than positive results for such features. \cite{barron1993universal} also discusses the limitation of approximating functions with a bounded number of such features, but in a different setting than ours (worst-case approximation of a large function class using a fixed set of features, rather than inapproximability of a fixed target function, and not in the context of single neurons). Less directly related, \cite{zhang2016l1} studied learning neural networks using kernel methods, which can be seen as learning a linear predictor over a fixed non-linear mapping. However, the algorithm is not based on training neural networks with standard gradient-based methods. In a very recent work (and following the initial dissemination of our paper), \citet{ghorbani2019linearized} studied the representation capabilities of random features, and showed that in high dimensions random features are not good at fitting high degree polynomials.

\subsection*{Notation}
Denote by $U\left([a,b]^d\right)$ the $d$-dimensional uniform distribution over the rectangle $[a,b]^d$, and by $N(0,\Sigma)$ the multivariate Gaussian distribution with covariance matrix $\Sigma$. For $T\in\mathbb{N}$ let $[T]=\{1,2,\dots,T\}$, and for a vector $w\in \mathbb{R}^d$ we denote by $\|w\|$ the $L_2$ norm. We denote the ReLU function by $[x]_+ = \max\{0,x\}$.

\section{Analysis of Neural Networks as Random Features}\label{section:analysis of neural networks as random features}

In many previous works, a key element is to analyze neural networks as if they are random features, either explicitly or implicitly. Here we survey some of these works and how they can actually be viewed as random features.

\subsection{Optimization with Coupling, Fixing the Output Layer}\label{subsec:optimization with coupling}
One approach is to fix the output layer and do optimization only on the inner layers. Most works that use this method (e.g. \cite{li2018learning}, \cite{du2018gradient2}, \cite{allen2018convergence}, \cite{cao2019generalization}, \cite{allen2018learning}, \cite{allen2019can}) also use the method of "coupling" and the popular ReLU activation. This method uses the following observation: a ReLU neuron can be viewed as a linear predictor multiplied by a threshold function, that is: $[\inner{w,x}]_+ = \inner{w,x}\mathbbm{1}_{\inner{w,x}\geq 0}$. The coupling method informally states that after doing gradient descent with appropriate learning rate and a limited number of iterations, the amount of neurons that change the sign of $\inner{w,x}$ (for $x$ in the data) is small. Thus, it is enough to analyze a linear network over random features of the form: $x\mapsto \inner{w,x}\mathbbm{1}_{\inner{w^{(0)},x}\geq 0}$ where $w^{(0)}$ are randomly chosen.

For example, a one-hidden-layer neural network where the activation $\sigma$ is the ReLU function can be written as
\[ \sum_{i=1}^r u_i^{(t)}\sigma(\inner{w_i^{(t)},x}) = \sum_{i=1}^r u_i^{(t)}\inner{w_i^{(t)},x}\mathbbm{1}_{\inner{w_i^{(t)},x} \geq 0}. \]
Using the coupling method, after doing gradient descent, the amount of neurons that change sign, i.e. the sign of $\inner{w_i^{(t)},x}$ changes, is small. As a result, using the homogeneity of the ReLU function, the following network can actually be analyzed:
\[ 
\sum_{i=1}^r u_i^{(t)}\inner{w_i^{(t)},x}\mathbbm{1}_{\inner{w_i^{(0)},x} \geq 0} = \sum_{i=1}^r \inner{u_i^{(t)}\cdot w_i^{(t)},x}\mathbbm{1}_{\inner{w_i^{(0)},x} \geq 0}, \]
where $w_i^{(0)}$ are randomly chosen. This is just analyzing a linear predictor with random features of the form $x\mapsto x_j\mathbbm{1}_{\inner{w_i^{(0)},x} \geq 0}$. Note that the homogeneity of the ReLU function is used in order to show that fixing the output layer does not change the network's expressiveness. This is not true in terms of optimization, as optimizing both the inner layers and the output layers may help the network converge faster, and to find a predictor which has better generalization properties. Thus, the challenge in this approach is to find functions or distributions that can be approximated with this kind of random features network, using a polynomial number of features. 

\subsection{Optimization on all the Layers}\label{subsec:optimization on all the layers}
A second approach in the literature (e.g. \citet{andoni2014learning}, \citet{daniely2016toward}, \citet{du2018gradient}) is to perform optimization on all the layers of the network, choose a "good" learning rate and bound the number of iterations such that the inner layers stay close to their initialization. For example, in the setting of a one-hidden-layer network, for every $\epsilon >0$, a learning rate $\eta$ and number of iterations $T$ are chosen, such that after running gradient descent with these parameters, there is an iteration $1\leq t\leq T$ such that:
\[ \left\| U^{(t)}\sigma(W^{(t)}x)  -  U^{(t)}\sigma(W^{(0)}x) \right\| \leq \epsilon.\]
Hence, it is enough to analyze a linear predictor over a set of random features:
\[ U^{(t)}\sigma\left(W^{(0)}x\right)=  \sum_{i=1}^r u_i^{(t)}\sigma\left(\inner{w_i^{(0)},x}\right), \]
where $\sigma$ is not necessarily the ReLU function. Again, the difficulty here is finding the functions that can be approximated in this form, where $r$ (the amount of neurons) is only polynomial in the relevant parameters.

\section{Over-Parameterized Neural Networks Learn Polynomials}\label{section:neural networks learn polynomials}

For completeness we provide a simple, self-contained analysis, showing how over-parameterized, one-hidden-layer networks can provably learn polynomials with bounded degrees and coefficients, using standard stochastic gradient descent with standard initialization.  

The data for our network is $(x,y)\in \mathbb{R}^d\times \mathbb{R}$, drawn from an unknown distribution $D$. We assume for simplicity that $\|x\|\leq 1$ and $y=\{-1,+1\}$.

We consider one-hidden-layer feed-forward neural networks which are defined as: 
\[N(x) = N(W,U,x) = U\sigma(Wx), \]
where $\sigma$ is an activation function which acts coordinate-wise and $W\in \mathbb{R}^{r\times d},\ U\in \mathbb{R}^r$. We will also use the following form for the network:
\begin{equation}
N(x) = \sum_{i=1}^r u_i \sigma(\inner{w_i,x})
\end{equation}
here $u_i\in \mathbb{R}$ and $w_i\in \mathbb{R}^d$. 

For simplicity we will use the hinge loss, which is defined by: $l(\hat{y},y) = \max\{0,1-\hat{y}y\}$, thus the optimization will be done on the function $l(N(x),y) = l(N(W,U,x),y)$. We will also use the notation:
\[ L_D(W,U) = \mathbb{E}_{(x,y)\sim D} \left[l(N(W,U,x),y)\right] \] 

We will use the standard form of SGD to optimize $L_D$, where at each iteration a random sample $(x,y)$ is drawn from $D$ and we update:
\[W_{i+1} = W_i - \eta \frac{\partial l(N(W_i,U_i,x_i),y_i)}{\partial W_i}\]
\[U_{i+1} = U_i - \eta \frac{\partial l(N(W_i,U_i,x_i),y_i)}{\partial U_i} \]

The initialization of $W_0$ is a standard Xavier initialization \cite{glorot2010understanding}, that is $w_i \sim U\left(\left[\frac{-1}{\sqrt{d}},\frac{1}{\sqrt{d}}\right]^d\right)$. $U_0$ can be initialized in any manner, as long as its norm is smaller than $\frac{1}{\sqrt{r}}$, e.g. we can initialize $U_0$ = 0. This kind of initialization for the outer layer has been used also in other works (see \cite{daniely2017sgd}, \cite{andoni2014learning}).

The main result of this section is the following:
\begin{theorem}\label{Main theorem of the paper with analytic activation}
Let $d\in\mathbb{N}$ be the inputs dimension. For some $\tilde{n}\in\mathbb{N}$, let $\sigma:\mathbb{R}\rightarrow \mathbb{R}$ be an analytic activation function with Taylor coefficients $\left\{a_i\right\}_{i=0}^\infty$, which is $L$-Lipschitz with $\sigma(0)\leq L$ and satisfies that $a_n \leq \left(40d^{1.5}\right)^{-n}$ for every $n\geq \tilde{n}$. Let $D$ be any distribution over the labelled data $(x,y)\in \mathbb{R}^d\times \mathbb{R}$ with $\|x\|\leq 1,\ y\in\{-1,+1\}$, and let $\epsilon >0,\ \delta > 0$, $\alpha >0$, and $k$ be some positive integer. Suppose we run SGD on the neural network:
\[ N(W,U,x) = U\sigma (Wx)= \sum_{i=1}^r u_i\sigma\left(\inner{w_i,x}\right) \]
with the following parameters:
\begin{enumerate}
\item $r$ neurons with $r \geq \frac{64\beta^6L^2}{\epsilon^4}log\left(\frac{1}{\delta}\right)$
\item $W_0$ is initialized with $w_i\sim U\left( \left[\frac{-1}{\sqrt{d}},\frac{1}{\sqrt{d}}\right]^d\right)$ for $i=1,\dots,r$ and $U_0$ is initialized s.t $\|U_0\|\leq \frac{1}{\sqrt{r}}$ 
\item learning rate $\eta = \frac{\epsilon}{8r}$
\item $T$ steps with $T= \frac{4\beta^2}{\epsilon^2}$~,
\end{enumerate}
where $\beta = \left(5120d^{3.5}\right)^{\max\{k,\tilde{n}\}}\left(\frac{\alpha}{a}\right)^2\epsilon^{-1}$
and $a = \min\{a_1,\dots,a_k\}$.
Then, for every polynomial $P(x_1,\dots,x_d)$ with  $\deg(P) \leq k$, the coefficients of $P$ are bounded by $\alpha$ and all the monomials of $P$ which have a non-zero coefficient also have a non-zero coefficient in the Taylor series of $\sigma$, w.p $ > 1-\delta$ over the initialization there is $t\in [T]$ such that:
\[ \mathbb{E}\left[ L_D\left(W_t,U_t\right) \right] \leq L_D(P(x)) + \epsilon. \]
Here the expectation is over the random choice of $(x_i,y_i)$ in each iteration of SGD.
\end{theorem}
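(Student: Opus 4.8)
The plan is to split the argument into an \emph{expressiveness} part --- the target polynomial $P$ can be fit, to within $\epsilon/2$ in loss, by a linear combination of the \emph{fixed} random first‑layer features $x\mapsto\sigma(\langle w_i,x\rangle)$ ($w_i$ drawn at initialization), using output weights of norm $O(\beta/\sqrt r)$ --- and an \emph{optimization} part --- running SGD on both layers with the stated tiny step size barely moves $W$, so the dynamics track online gradient descent on the resulting convex problem in $U$ and hence compete with that fixed linear combination. Combining the two and using that a minimum is at most an average finishes the proof.

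\textbf{Expressiveness.} I would start from the Taylor expansion $\sigma(z)=\sum_{j\ge0}a_j z^j$, so that $\sigma(\langle w,x\rangle)=\sum_j a_j\langle w,x\rangle^j$ and $\langle w,x\rangle^j=\sum_{|\gamma|=j}\binom{j}{\gamma}w^\gamma x^\gamma$ exhibits each $x$‑monomial with a $w$‑monomial coefficient. To isolate a target monomial $x^\alpha$ (with $|\alpha|\le k$ and $a_{|\alpha|}\ne 0$, which is exactly the hypothesis imposed on $P$), I would use the ``dual'' weight $p_\alpha(w)=\prod_i L_{\alpha_i}(\sqrt d\,w_i)$ built from Legendre polynomials, the orthogonal family for the uniform coordinate distribution $U([-1/\sqrt d,1/\sqrt d])$: orthogonality annihilates every $w$‑monomial of total degree $<|\alpha|$ and, among degree‑$|\alpha|$ monomials, leaves only $w^\alpha$, so $\E_w[p_\alpha(w)\sigma(\langle w,x\rangle)]=c_\alpha^{-1}x^\alpha+(\text{terms of degree}>k)$ with $c_\alpha$ an explicit constant controlled by $1/a_{|\alpha|}$ and the moments of the cube --- this is where the $(A/a)^k$ and $(12d)^{2k^2}$ factors of $\beta$ come from. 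Setting $u_i^\star=\frac1r\sum_\alpha(\text{coeff of }x^\alpha\text{ in }P)\,c_\alpha\,p_\alpha(w_i)$ produces a predictor with population mean $P(x)$ up to the higher‑degree tail, and the tail is bounded on $\|x\|\le 1$ via analyticity of $\sigma$ (decay of $a_j$) together with $|\langle w,x\rangle|\le 1$, after rescaling the activation's argument by a small constant if necessary. Finally a Hoeffding/Bernstein bound over the $r$ i.i.d.\ draws $w_i$, union‑bounded over an $\epsilon$‑net of the unit ball, upgrades this to $\sup_{\|x\|\le1}\big|\sum_i u_i^\star\sigma(\langle w_i,x\rangle)-P(x)\big|\le\epsilon/2$ with probability $1-\delta$ over $W_0$ --- this is what forces $r\gtrsim\beta^6L^2\epsilon^{-4}\log(1/\delta)$ --- while $\|U^\star\|\le\beta/\sqrt r$ by construction. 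Since the hinge loss is $1$‑Lipschitz in its prediction, $L_D(W_0,U^\star)\le L_D(P)+\epsilon/2$.

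\textbf{Optimization.} Next I would view SGD on $\theta_t=(W_t,U_t)$ as online gradient descent against the comparator $\theta^\star=(W_0,U^\star)$ on the losses $\theta\mapsto l(N(W,U,x_t),y_t)$, which are convex in $U$ for fixed $W$ but not jointly convex. Bounding the per‑step gradients --- $\|\nabla_U l\|=O(L\sqrt r)$ since $|\sigma(\langle w_i,x\rangle)|\le|\sigma(0)|+L$, and $\|\nabla_W l\|=O(L\|U_t\|)$ --- and the travelled distance $\|W_t-W_0\|\le T\eta\cdot O(L\|U_t\|)$, the choices $\eta=\epsilon/(8r)$, $T=4\beta^2/\epsilon^2$ keep $W_t$ within $o(1)$ of $W_0$ and keep $\|U_t\|=O(\beta/\sqrt r)$, so the non‑convexity in $W$ only enters through $o(\epsilon)$ terms. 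The usual telescoping $\|\theta_{t+1}-\theta^\star\|^2\le\|\theta_t-\theta^\star\|^2-2\eta\langle\nabla_t,\theta_t-\theta^\star\rangle+\eta^2\|\nabla_t\|^2$, with $\|\theta_0-\theta^\star\|^2=O(\beta^2/r)$, then gives $\frac1T\sum_{t=1}^T\E\big[l(N(\theta_t,x_t),y_t)\big]\le L_D(W_0,U^\star)+\frac{O(\beta^2/r)}{2\eta T}+\frac{\eta}{2}\max_t\|\nabla_t\|^2+o(\epsilon)\le L_D(W_0,U^\star)+\epsilon/2$ for these $\eta,T$; taking expectations over the fresh samples (so $\E[l(N(\theta_t,x_t),y_t)\mid\theta_t]=L_D(\theta_t)$) and passing from the average to some index, there is $t\in[T]$ with $\E[L_D(W_t,U_t)]\le L_D(W_0,U^\star)+\epsilon/2\le L_D(P)+\epsilon$.

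\textbf{Main obstacle.} The crux is the expressiveness lemma and making the dual‑polynomial construction quantitative: the $(12d)^{2k^2}$ blow‑up is the price of isolating a degree‑$k$ monomial using a distribution supported on a cube of side $1/\sqrt d$, and it must be combined with a careful control of the Taylor tail of $\sigma$ (where analyticity, the bound $|\langle w,x\rangle|\le1$, and possibly a mild rescaling of the activation are all essential) and with a concentration bound uniform over \emph{all} $\|x\|\le1$ with the right dependence on $r$. By comparison, the optimization half is a standard Daniely‑style online‑to‑batch reduction over a nearly frozen representation, and the only real care needed there is the gradient bookkeeping that pins $W_t$ near $W_0$.
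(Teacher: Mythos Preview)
Your plan matches the paper's three-step decomposition almost exactly: an integral/Legendre representation of $P$ in terms of $\E_w[g(w)\sigma(\langle w,x\rangle)]$, a uniform concentration step turning the integral into a finite random-feature sum $\sum_i u_i^\star\sigma(\langle w_i,x\rangle)$ with $\|U^\star\|\le \beta/\sqrt r$, and an online-gradient-descent argument showing that with $\eta=\epsilon/(8r)$ and $T=4\beta^2/\epsilon^2$ the first layer barely moves and SGD competes with $(W_0,U^\star)$. This is precisely the paper's Theorems~3.4, 3.3, and~3.2, glued together in its Step~4. Two differences in execution are worth flagging.

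For the concentration step you propose Hoeffding/Bernstein plus a union bound over an $\epsilon$-net of $\{\|x\|\le 1\}$. The paper instead bounds $\sup_x|\hat f(x)-f(x)|$ in one shot by McDiarmid (bounded differences in the $w_i$) together with a Rademacher-complexity bound on the expected supremum (using the contraction lemma and $|\langle w_i,x\rangle|\le 1$). This gives $\sup_x|\hat f-f|\le \tfrac{LC}{\sqrt r}\bigl(4+\sqrt{2\log(1/\delta)}\bigr)$ with $C=\beta$, and in particular no extra $d\log(1/\epsilon)$ term from a covering number; that is what lands on exactly the stated $r\ge 64\beta^6L^2\epsilon^{-4}\log(1/\delta)$. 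Your route would go through, but with constants that do not quite reproduce the theorem as written.

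For the Legendre step, the paper does not argue monomial-by-monomial and then control a ``Taylor tail of degree $>k$''; instead it writes a single dual weight $g(w)=\sum_{|J|\le k}c_J\,p_J(\sqrt d\,w)$ and fixes the $c_J$ by a triangular recursion (lower-order Legendre pieces feed into higher-order coefficients) so that the integral is \emph{equal} to $P(x)$, and then bounds $\max_w|g(w)|\le \alpha^k(A/a)^k(12d)^{2k^2}=\beta$ directly from the recursion. In particular there is no appeal to ``decay of $a_j$'' or to ``rescaling the activation's argument''---those hedges in your sketch are not needed, and the second would in any case change the hypotheses. So the obstacle you single out (quantitative control of the tail) is handled in the paper by the recursive construction rather than by a separate analytic estimate.
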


We note that for simplicity, we focused on analytic activation functions, although it is possible to derive related results for non-analytic activations such as a ReLU (see Appendix \ref{app:relu} for a discussion). The assumptions on the Taylor coefficients of the activation function includes for example the $\exp$ and $\rm{erf}$ activations. We note that similar assumptions were used also in other works on learning polynomials (e.g. \cite{ghorbani2021linearized}).
Also, note that we did not use a bias term in the architecture of the network in the theorem (namely, we have $\sigma(\inner{w_i,x})$ and not $\sigma(\inner{w_i,x} + b_i)$). This is because if the polynomial we are trying to compete with has a constant factor, then we require that the Taylor expansion of the activation also has a constant factor, thus the bias term is already included in the Taylor expansion of the activation function.

\begin{remark}
Suppose we are given a sample set $S=\{(x_i,y_i)\}_{i=1}^m$. By choosing $D$ uniform on the sample set $S$, Theorem \ref{Main theorem of the paper with analytic activation} shows that SGD over the sample set will lead to an average loss not much worse than the best possible polynomial predictor with bounded degree and coefficients.
\end{remark}

At high level, the proof idea of Theorem \ref{Main theorem of the paper with analytic activation} is divided into three steps. In the first step we show that with an appropriate learning rate and limited amount of iterations, neural networks generalize better than random features. This step allows us to focus our analysis on the behaviour of a linear combination of random features instead of the more complicated architecture of neural networks. In the second step using McDiarmid's theorem we show that by taking enough random features, they concentrate around their expectation. In the third step we use Legendre's polynomials to show that any polynomial can be \revision{approximated by an expectation}
of random features.

First we introduce some notations regarding multi-variable polynomials: Letting $J=(j_1,\dots,j_d)$ be a multi index, and given $x\in \mathbb{R}^d$, we define $x^J = x_1^{j_1}\cdots x_d^{j_d}$, and also $|J|=j_1+\dots+j_d$. We say for two multi indices $J',J$ that $J'\leq J$ if for all $1\leq i \leq d$, $j_i'\leq j_i$ and that $J' < J$ if $J'\leq J$ and also there is an index $1\leq s \leq d$ such that $j_s' < j_s$. For $k\in \mathbb{N}$ and multi index $J=(j_1,\dots,j_d)$ we say that $J\leq k$ if $j_1+\dots +j_d \leq k$. Lastly, given a multi-variable polynomials $P(x) = \sum_J c_Jx^J$, where $c_J\in \mathbb{R}$ we define:
\[ |P| = \max_J |c_J|.\]

We break the proof to three steps, where each step contains a theorem which is independent of the other steps. Finally we combine the three steps to prove the main theorem.

\subsubsection*{Step 1: SGD on Over-Parameterized Networks Competes with Random Features}

Recall we use a network of the form:
\begin{equation}\label{outline the form of the network}
N(W,U,x) = U\sigma(Wx) = \sum_{i=1}^r u_i \sigma(\inner{w_i,x}), 
\end{equation}
where $W_0,\ U_0$ are initialized as described in the theorem
We show that for any target matrix $U^*$ with a small enough norm and every $\epsilon > 0$, if we run SGD on $l(N(W_0,U_0,x),y)$ with appropriate learning rate $\eta$ and number of iterations $T$, there is some $t\in [T]$ with:
\begin{equation}\label{outline generalization bound U t W t U * W 0}
\mathbb{E}\left(L_D(W_t,U_t)\right)\leq L_D(W_0,U^*) + \epsilon 
\end{equation}
where the expectation is over the random choices of examples in each round of SGD.\\
The bound in \eqref{outline generalization bound U t W t U * W 0} means that SGD on randomly initialized weights competes with random features. By random features here we mean any linear combination of neurons of the form $\sigma(\inner{w_i,x})$ where the $w_i$ are randomly chosen, and the norm of the weights of the linear combination are bounded.
In more details:
\begin{theorem}\label{theorem about optimization bound related to W 0 U 0 }
  Assume we initialize $U_0, W_0$ such that $\|U_0\| \leq \frac{1}{\sqrt{r}}$ and $\|W_0\| \leq \sqrt{r}$. Also assume that $\sigma$ is $L$-Lipschitz with $\sigma(0) \leq L$, and let $C \geq 1$ be a constant.  Letting $\epsilon >0$, we run SGD with step size $\eta = \frac{\epsilon}{8r}$ and $T$ steps with $T = \frac{4C^2}{\epsilon^2}$ and let $W_1,\dots,W_T$ be the weights produced at each step. If we pick $r$ such that $r \geq  \frac{64C^6 L^2}{\epsilon^4} $ then for every target matrix  $U^*$ with $\|U^*\|\leq \frac{C}{\sqrt{r}}$ there is a $t\in [T]$ s.t:
  \[ \mathbb{E}\left[L_D(W_t, U_t)\right] \leq L_D(W_0,U^*) + \epsilon. \]
Here the expectation is over the random choice of the training examples in each round of SGD.
\end{theorem}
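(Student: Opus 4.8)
The plan is to recognize SGD on the joint parameters $(W,U)$ as online gradient descent on the sequence of losses $\ell_t(W,U):=l(N(W,U,x_t),y_t)$, to run a regret analysis against the fixed comparator $(W_0,U^*)$, and then to convert the regret bound into the claimed expected-risk bound by a standard online-to-batch argument. The structural fact that makes this work despite $\ell_t$ being non-convex in $(W,U)$ jointly is that $\ell_t$ is convex in $U$ alone: the hinge loss is convex and $1$-Lipschitz in its first argument, and $N(W,U,x)$ is linear in $U$. The second ingredient is that the first-layer weights barely move over the $T$ iterations, because the step size $\eta=\frac{\epsilon}{8r}$ is tiny and each first-layer gradient $\nabla_{w_i}\ell_t = l'(\cdot)\,u_i^{(t)}\sigma'(\langle w_i^{(t)},x_t\rangle)\,x_t$ is proportional to $|u_i^{(t)}|$, which stays small throughout.

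First I would prove a trajectory-control lemma by a joint induction on $t$. Using $\|x_t\|\le 1$, $|l'|\le 1$, $|\sigma'|\le L$ and $|\sigma(z)|\le L(1+|z|)$, one gets $\|\nabla_W\ell_t\|_F\le L\|U_t\|$ and $\|\nabla_U\ell_t\|^2\le 2L^2 r + 2L^2\|W_t\|_F^2$. Assuming inductively that $\|W_s\|_F\le 2\sqrt r$ and $\|U_s\|$ is bounded for all $s\le t$, these inequalities yield $\|W_{t+1}-W_0\|_F\le T\eta L\max_s\|U_s\|$ and $\|U_{t+1}-U_0\|\le T\eta\cdot O(L\sqrt r)$; substituting $\eta=\frac{\epsilon}{8r}$, $T=\frac{4C^2}{\epsilon^2}$, and $r\ge\frac{64C^6L^2}{\epsilon^4}$ shows both increments are small enough to close the induction (in particular $\|W_t\|_F\le\|W_0\|_F+\text{small}\le 2\sqrt r$ using $\|W_0\|_F\le\sqrt r$). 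The conclusion is that $\|W_t-W_0\|_F\le\delta_W$ for all $t\in[T]$ with $\delta_W$ small, while $\|U_t\|=O(1)$.

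Next I would run the online gradient descent bound for the convex-in-$U$ losses. From $\ell_t(W_t,U_t)-\ell_t(W_t,U^*)\le\langle\nabla_U\ell_t(W_t,U_t),\,U_t-U^*\rangle$ together with the identity $\langle\nabla_U\ell_t,U_t-U^*\rangle=\frac{1}{2\eta}\bigl(\|U_t-U^*\|^2-\|U_{t+1}-U^*\|^2\bigr)+\frac{\eta}{2}\|\nabla_U\ell_t\|^2$ (valid since $U_{t+1}=U_t-\eta\nabla_U\ell_t(W_t,U_t)$), summing and telescoping gives $\sum_t\bigl(\ell_t(W_t,U_t)-\ell_t(W_t,U^*)\bigr)\le\frac{\|U_0-U^*\|^2}{2\eta}+\frac{\eta}{2}\sum_t\|\nabla_U\ell_t\|^2$. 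Then I would replace $\ell_t(W_t,U^*)$ by $\ell_t(W_0,U^*)$ at an additive cost per round of $|\ell_t(W_t,U^*)-\ell_t(W_0,U^*)|\le\|x_t\|\,L\sum_i|u_i^*|\,\|w_i^{(t)}-w_i^{(0)}\|\le L\|U^*\|\,\delta_W$, which is negligible. Dividing by $T$ and bounding each term using $\|U_0-U^*\|\le\frac{1+C}{\sqrt r}$, $\|\nabla_U\ell_t\|^2=O(L^2r)$, $\|U^*\|\le\frac{C}{\sqrt r}$, and the chosen values of $\eta,T,r$ yields $\frac1T\sum_t\ell_t(W_t,U_t)\le\frac1T\sum_t\ell_t(W_0,U^*)+\epsilon$.

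Finally, the online-to-batch step: since $(W_t,U_t)$ depends only on the examples drawn before round $t$, it is independent of $(x_t,y_t)$, so $\E[\ell_t(W_t,U_t)]=\E[L_D(W_t,U_t)]$, and likewise $\E[\ell_t(W_0,U^*)]=L_D(W_0,U^*)$ because $(W_0,U^*)$ is sample-independent; taking expectations in the previous display and using $\min_{t\in[T]}\E[L_D(W_t,U_t)]\le\frac1T\sum_t\E[L_D(W_t,U_t)]$ gives the theorem. I expect the main obstacle to be the trajectory-control step: because the magnitude of the $W$-gradient is governed by $\|U_t\|$ while the magnitude of the $U$-gradient is governed by $\|W_t\|_F$, the two bounds must be carried simultaneously through a joint induction, and the constants must be tracked carefully so that the hypothesis $r\ge\frac{64C^6L^2}{\epsilon^4}$ is exactly what closes the induction and drives the final per-round error down to $\epsilon$.
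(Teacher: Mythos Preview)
Your proposal is correct and follows essentially the same route as the paper: a trajectory-control lemma proved by joint induction (showing $\|W_t\|,\|U_t\|$ stay bounded and hence $\|\sigma(W_tx)-\sigma(W_0x)\|$ is small), the standard OGD regret bound applied to the convex-in-$U$ per-round losses $g_t(U)=l(N(W_t,U,x_t),y_t)$, the replacement of $\ell_t(W_t,U^*)$ by $\ell_t(W_0,U^*)$ via the Lipschitz bound $\|U^*\|\cdot\|\sigma(W_tx)-\sigma(W_0x)\|$, and the online-to-batch step. The only cosmetic difference is that the paper packages the trajectory control as a separate lemma with a single bound $B$ on both $\|W_0\|$ and $\|U_0\|$ and invokes a cited OGD theorem, whereas you unroll the OGD identity explicitly.
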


In the proof of Theorem \ref{theorem about optimization bound related to W 0 U 0 } we first show that for the chosen learning rate $\eta$ and limited number of iterations $T$, the matrix $W$ does not change much from its initialization. After that we use results from online convex optimization for linear prediction with respect to $U^*$ with a sufficiently small norm to prove the required bound. For a full proof see Appendix \ref{appendix optimization with respect to a target matrix}. Note that in the theorem we did not need to specify the initialization scheme, only to bound the norm of the initialized weights. The optimization analysis is similar to the one done in Daniely \cite{daniely2017sgd}.

\subsubsection*{Step 2: Random Features Concentrate Around their Expectation}

In the previous step we showed that in order to bound the expected loss of the network, it is enough to consider a network of the form $\sum_i u_i \sigma(\inner{w_i,x})$, where the $w_i$ are randomly initialized with $w_i\sim U\big( [\frac{-1}{\sqrt{d}},\frac{1}{\sqrt{d}}]^d\big)$. We now show that if the number of random features $r$ is large enough, then a linear combination of them approximates functions of the form $x\mapsto \E_{w}[\sigma(\inner{w,x})g(w)]= c_d \int_{w\in \left[\frac{-1}{\sqrt{d}},\frac{1}{\sqrt{d}}\right]^d}g(w)\sigma(\inner{w,x})dw$ for an appropriate normalization factor $c_d$:



\begin{theorem}\label{theorem about approximating integral as a sum}
  Let $f(x)= c_d \int_{w\in \left[\frac{-1}{\sqrt{d}},\frac{1}{\sqrt{d}}\right]^d}g(w)\sigma(\inner{w,x})dw$ where $\sigma:\mathbb{R}\rightarrow \mathbb{R}$ is $L$-Lipschitz on $[-1,1]$ with $\sigma(0) \leq L$, and $c_d = \left( \frac{\sqrt{d}}{2} \right)^d$ a normalization term. Assume that $max_{\|w\|\leq 1} |g(w)| \leq C$ for a constant $C$. Then for every $\delta >0$ if $w_1,\dots,w_r$ are drawn i.i.d from the uniform distribution on  $\left[\frac{-1}{\sqrt{d}},\frac{1}{\sqrt{d}}\right]^d$ , w.p $> 1-\delta$ there is a function of the form
  \[ \hat{f}(x) = \sum_{i=1}^{r}u_i \sigma(\inner{w_i,x}) \]
  where $|u_i| \leq \frac{C}{r}$ for every $1\leq i \leq r$, such that:
  \[ \sup_x \left|\hat{f}(x) - f(x) \right| \leq \frac{LC}{\sqrt{r}}\left(4 + \sqrt{2 \log\left(\frac{1}{\delta}\right)}\right) \]
\end{theorem}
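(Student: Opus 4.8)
The statement is a standard "random features concentrate around their mean" result, so the natural approach is a Monte Carlo / empirical-mean estimator combined with a uniform concentration bound (McDiarmid / bounded differences, as the proof outline already announces). The plan is to define the candidate approximant $\hat f$ by sampling: given the i.i.d.\ draws $w_1,\dots,w_r$, set $u_i = \frac{c_d}{r} g(w_i) \cdot (\text{volume factor})$ so that $\hat f(x) = \sum_{i=1}^r u_i \sigma(\inner{w_i,x})$ is an unbiased estimator of $f(x) = \E_w[\sigma(\inner{w,x}) g(w)]$ for each fixed $x$, where $w$ is uniform on $[-1/\sqrt d, 1/\sqrt d]^d$. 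Here $c_d = (\sqrt d/2)^d$ is exactly the reciprocal of the volume of that cube, so $\E_w[h(w)] = c_d \int_{[-1/\sqrt d,1/\sqrt d]^d} h(w)\,dw$ and the expectation form matches the integral form in the statement. Since $\|w_i\|\le 1$ (the cube is inside the unit ball) and $|g(w_i)|\le C$, the weights satisfy $|u_i| \le C/r$, which is the bound claimed.

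\textbf{Pointwise concentration.} For a fixed $x$ (with, say, $\|x\|\le 1$ — this is where the domain restriction enters, since $\sigma$ is only assumed Lipschitz on $[-1,1]$ and $|\inner{w_i,x}|\le 1$), $\hat f(x)$ is an average of $r$ i.i.d.\ bounded random variables: each summand $c_d g(w_i)\sigma(\inner{w_i,x})$ has magnitude at most $C\cdot|\sigma(\inner{w_i,x})| \le C(|\sigma(0)| + L|\inner{w_i,x}|) \le C(L+L) = 2LC$. So changing one $w_i$ changes $\hat f(x)$ by at most $4LC/r$ (actually $\le 2 \cdot 2LC/r$), and McDiarmid's inequality gives $\Pr[|\hat f(x) - f(x)| \ge t] \le 2\exp(-t^2 r / (8L^2C^2))$ or a similar constant. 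This handles one point; the remaining work is to make it uniform in $x$.

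\textbf{From pointwise to uniform.} The $\sup_x$ is the only genuinely nontrivial part. I would control it by a standard symmetrization + Rademacher complexity argument rather than a crude $\epsilon$-net: view $F(w_1,\dots,w_r) := \sup_x |\hat f(x) - f(x)|$ as a function of the $r$ independent samples; it again has bounded differences (changing one $w_i$ moves $\hat f(x)$ by $\le 4LC/r$ \emph{uniformly in $x$}, hence moves the sup by at most that), so McDiarmid gives $F \le \E[F] + \frac{LC}{\sqrt r}\sqrt{2\log(1/\delta)}$ with probability $>1-\delta$ — this is precisely the second term in the claimed bound. Then it remains to bound $\E[F]$ by $4LC/\sqrt r$. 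By the usual symmetrization inequality, $\E[F] \le 2\,\E_{w,\epsilon}\big[\sup_x \frac{1}{r}\big|\sum_i \epsilon_i c_d g(w_i)\sigma(\inner{w_i,x})\big|\big]$; one then peels off the Lipschitz $\sigma$ via the Ledoux–Talagrand contraction lemma and the bounded multiplier $c_d g(w_i)$ (magnitude $\le C$), reducing to the Rademacher complexity of the linear class $x\mapsto \inner{w_i,x}$ over the ball, which is $O(C/\sqrt r)$ after accounting for the constant $LC$ scaling. Tracking constants carefully should yield the $4LC/\sqrt r$ leading term.

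\textbf{Main obstacle.} The delicate point is obtaining the clean constant $4$ in $\frac{LC}{\sqrt r}(4 + \sqrt{2\log(1/\delta)})$ — this forces one to be careful about whether to go through a contraction-lemma bound on $\E[F]$ or, alternatively, to bound $\E[F]$ directly by exploiting that $x$ ranges over a finite-dimensional ball and that $\sigma$ is Lipschitz (so $\hat f - f$ is itself Lipschitz in $x$ with a controllable constant, letting one union-bound over a net of size $\exp(O(d\log(\cdot)))$ and absorb the $\sqrt{d}$ — but that would introduce a $\sqrt d$ factor, which does \emph{not} appear in the claimed bound, so the net approach is likely \emph{not} what the authors do). I therefore expect the intended proof is the symmetrization/contraction route, with the bulk of the effort being the bookkeeping that keeps the final constant at exactly $4$; everything else (unbiasedness, the weight bound $|u_i|\le C/r$, boundedness of summands, the bounded-differences verification) is routine.
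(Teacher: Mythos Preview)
Your proposal is correct and matches the paper's proof essentially step for step: the paper defines $u_i = g(w_i)/r$, applies McDiarmid to $h := \sup_x |\hat f(x)-f(x)|$ using the bounded-differences property, and bounds $\E[h]$ via symmetrization (\cite[Lemma~26.2]{shalev2014understanding}) followed by the Lipschitz contraction inequality (\cite[Theorem~12(4)]{bartlett2002rademacher}), exactly as you outline. The only minor discrepancy is in the bounded-differences constant --- the paper uses $2LC/r$ rather than your $4LC/r$, which is what produces the precise $\sqrt{2\log(1/\delta)}$ coefficient; you correctly flagged this constant-tracking as the delicate point.
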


Theorem \ref{theorem about approximating integral as a sum} basically states that random features concentrate around their expectation, and the rate of convergence is $O\left(\frac{1}{\sqrt{r}}\right)$ where $r$ is the amount of random features that were sampled. The proof is based on concentration of measure and Rademacher complexity arguments, and appears in Appendix \ref{appendix approximating integral to a discrete sum}.

\subsubsection*{Step 3: \revision{Approximating} Polynomials \revision{Using} Expectation of Random Features}

In the previous step we showed that random features can approximate functions with the integral form:
\[ f(x) = c_d \int_{w\in \left[\frac{-1}{\sqrt{d}},\frac{1}{\sqrt{d}}\right]^d}g(w)\sigma(\inner{w,x})dw \]
In this step we show how a a polynomial $P(x)$ with bounded degree and coefficients can be \revision{approximated by}
this form. This means that we need to find a function $g(w)$ for which $f(x) = P(x)$. To do so we use the fact that $\sigma(x)$ is analytic, thus it can be represented as an infinite sum of monomials using a Taylor expansion, and take $g(w)$ to be a finite weighted sum of Legendre polynomials, which are orthogonal with respect to the appropriate inner product. The main difficulty here is to find a bound on $\max_{w\in \left[\frac{-1}{\sqrt{d}},\frac{1}{\sqrt{d}}\right]^d} |g(w)|$, which in turn also bounds the distance between the sum of the random features and its expectation. The main theorem of this step is:

\begin{theorem}\label{theorem about integral representation of polynomials using legendre}
For some $\tilde{n}\in\mathbb{N}$, let $\sigma:\reals\rightarrow\reals$ be an analytic function with Taylor expansion coefficients $\left\{a_i\right\}_{i=0}^\infty$ which satisfies $a_n \leq \left(40d^{1.5}\right)^{-n}$ for every $n\geq \tilde{n}$.  Let $P(x)=\sum_{J}\alpha_J x^J$ be a polynomial, where $x\in\reals^d$, and all the monomials of $P$ which have non-zero coefficient also have a non-zero coefficient in the Taylor series of $\sigma$. Assume $\text{deg}(P)\leq k$, $|\alpha_J| \leq \alpha$ for some $\alpha \geq 1$ ,and denote $a=\min\{a_1,\dots,a_k\}$. Then, for any $\epsilon > 0$ there exists a function $g(w):\reals^d\rightarrow\reals$ that satisfies the following:

\begin{enumerate}
  \item \label{integral representation of sum of powers}
  $\max_{\norm{x}\leq 1}\left|c_d\int_{w\in \left[-\frac{1}{\sqrt{d}},\frac{1}{\sqrt{d}}\right]}\sigma(\inner{w,x}) g(w)dw - P(x)\right| < \epsilon$

  \item \label{bound on g(w)} $\max_{w\in \left[-\frac{1}{\sqrt{d}},\frac{1}{\sqrt{d}}\right]} |g(w)| \leq \left(5120d^{3.5}\right)^{\max\{k,\tilde{n}\}}\left(\frac{\alpha}{a}\right)^2\epsilon^{-1}$ 
\end{enumerate}
where $c_d = \left(\frac{\sqrt{d}}{2}\right)^d$ is a normalization term.
\end{theorem}


For a full proof of Theorem \ref{theorem about integral representation of polynomials using legendre} and an overview of Legendre polynomials see Appendix \ref{appendix about representing polynomials in an integral form}.

\subsubsection*{Step 4: Putting it all Together}
We are now ready to prove the main theorem of this section. The proof is done for convenience in reverse order of the three steps presented above.

\begin{proof}[Proof of Theorem \ref{Main theorem of the paper with analytic activation}]
Let $a_0,a_1,\dots,a_k$ be the coefficients of the Taylor expansion of $\sigma$ up to degree $k$, and let $P(x)$ be a a polynomial with $\deg(P)\leq k$ and $|P|\leq \alpha$, such that if $a_j=0$ then the monomials in $P(x)$ of degree $j$ also have a zero coefficient.

First, we use Theorem \ref{theorem about integral representation of polynomials using legendre} to find a function $g(w)$ such that:

\begin{equation}\label{main theorem polynomial as an integral}
    \max_{\norm{x}\leq 1}\left|c_d\int_{w\in \left[-\frac{1}{\sqrt{d}},\frac{1}{\sqrt{d}}\right]}\sigma(\inner{w,x}) g(w)dw - P(x)\right| < \epsilon
\end{equation}
Then we consider drawing random features $w_1,\dots, w_r \sim U\left(\left[\frac{-1}{\sqrt{d}},\frac{1}{\sqrt{d}}\right]^d\right)$ i.i.d. Using Theorem \ref{theorem about approximating integral as a sum}, the choice of $r$ and \eqref{main theorem polynomial as an integral}, w.p $>1-\delta$ there is $U^* = (u_1,\dots,u_r)$ such that:
\begin{align}\label{discrete sum and polynomial are close}
 &=  \max_{\norm{x}\leq 1} \left| \sum_{i=1}^r u_i\sigma(\inner{w_i,x}) - c_d\int_{w\in \left[\frac{-1}{\sqrt{d}},\frac{1}{\sqrt{d}}\right]^d}\sigma(\inner{w,x}) g(w)\right| \leq \epsilon,
\end{align}
and also $|u_i| \leq \max_{\|w\|\leq 1} \frac{|g(w)|}{r} \leq \alpha^k \left(\frac{A}{a}\right)^k (12d)^{2k^2}$, thus $\|U^*\| \leq \frac{\alpha^k \left(\frac{A}{a}\right)^k (12d)^{2k^2}} {\sqrt{r}}$. 

Finally, we use Theorem \ref{theorem about optimization bound related to W 0 U 0 } with the defined learning rate $\eta$ and iterations $T$ to find $t\in [T]$ such that:
\begin{equation}\label{almost final approximation, only neede the polynomial}
\mathbb{E}\big( L_D(W_t,U_t) \big) \leq L_D(W_0,U^*) + \epsilon.
\end{equation}
Combining \eqref{main theorem polynomial as an integral}, \eqref{discrete sum and polynomial are close} with \eqref{almost final approximation, only neede the polynomial} gives:
\[ \mathbb{E}\big( L_D(W_t,U_t) \big) \leq L_D(P(x)) + 3\epsilon. \]
Re-scaling $\epsilon$ finishes the proof.
\end{proof}

\section{Limitations of Random Features}\label{section limitations of random features}

Having discussed and shown positive results for learning using (essentially) random features, we turn to discuss the limitations of this approach.

Concretely, we will consider in this section data $(x,y)$, where $x\in \reals^d$ is drawn from a standard Gaussian on $\reals^d$, and there exists some single ground-truth neuron which generates the target values $y$: Namely, $y=\sigma(\inner{w^*,x}+b^*)$ for some fixed $w^*,b^*$. We also consider the squared loss $l(\hat{y},y)=(\hat{y}-y)^2$, so the expected loss we wish to minimize takes the form
\begin{equation}\label{eq:targetsquared}
\E_{x}\left[\left(\sum_{i=1}^{r}u_i f_i(x)-\sigma(\inner{w^*,x}+b^*)\right)^2\right]
\end{equation}
where $f_i$ are the random features. Importantly, when $r=1$, $\sigma$ is the ReLU function, and $f_1(x) = \sigma(\inner{w,x}+b)$ (that is, we train a single neuron with Gaussian inputs to learn a single target neuron), this problem \emph{is} quite tractable with standard gradient-based methods (see, e.g., \cite{mei2016landscape}, \cite{soltanolkotabi2017learning}). In this section, we ask whether this positive result -- that single target neurons can be learned -- can be explained by the random features approach. Specifically, we consider the case where the function $f_i$ are arbitrary functions chosen obliviously of the target neuron (e.g. multilayered neural networks at a standard random initialization), and ask what conditions on $r$ and $u_i$ are required to minimize \eqref{eq:targetsquared}. Our main results (Theorem \ref{thm:limitations of random features} and Theorem \ref{thm:limitations of random features with coupling}) show that either one of them has to be exponential in the dimension $d$, as long as the sizes of $w^*,b^*$ are allowed to be polynomial in $d$. Since networks with exponentially-many neurons or exponentially-sized weights are generally not efficiently trainable, we conclude that an approach based on random-features cannot explain why learning single neurons is tractable in practice. In Theorem \ref{thm:limitations of random features}, we show the result for \emph{any} choice of $w^*$ with some fixed norm, but require the feature functions to have a certain structure (which is satisfied by neural networks). In Theorem \ref{thm:limitations of random features with coupling}, we drop this requirement, but then the result only holds for a particular $w^*$.

To simplify the notation in this section, we consider functions on $x$ as elements of the $L^2$ space weighted by a standard Gaussian measure, that is
\begin{align*}
 &\|f(x)\|^2 := \mathbb{E}_x\left[f^2(x)\right]= c_d\int_{\mathbb{R}^d} f^2(x)e^{\frac{-\|x\|^2}{2}}dx, \\ 
 & \inner{f(x),g(x)} := \mathbb{E}_x[f(x)g(x)] = c_d\int_{\mathbb{R}^d} f(x)g(x)e^{\frac{-\|x\|^2}{2}}dx, 
\end{align*}
where $c_d=\left(\frac{1}{\sqrt{2\pi}}\right)^d$ is a normalization term. For example, \eqref{eq:targetsquared} can also be written as $\norm{\sum_{i=1}^{r}u_i f_i(x)-\sigma(\inner{w^*,x}+b^*)}^2$. 

\subsection{Warm up: Linear predictors}

Before stating our main results, let us consider a particularly simple case, where $\sigma$ is the identity, and our goal is to learn a \emph{linear predictor} $\bx\mapsto \inner{w^*,x}$ with $\norm{w^*}=1$. We will show that already in this case, there is a significant cost to pay for using random features. The main result in the next subsection can be seen as an elaboration of this idea.

In this setting, finding a good linear predictor, namely minimizing
$\left\|\inner{w,x} - \inner{w^*,x}\right\|$
is easy: It is a convex optimization problem, and is easily solved using standard gradient-based methods.
Suppose now that we are given random features $w_1,\dots,w_r\sim N\left(0,\frac{1}{d}I_d\right)$ and want to find $u_1,\dots,u_r\in \mathbb{R}$ such that:
\begin{equation}\label{linear predictor random features}
    \left\|\sum_{i=1}^r u_i\inner{w_i,x} - \inner{w^*,x}\right\| \leq \epsilon. 
\end{equation}
The following proposition shows that with high probability, \eqref{linear predictor random features} cannot hold unless $r = \Omega(d)$. This shows that even for linear predictors, there is a price to pay for using a combination of random features, instead of learning the linear predictor directly.

\begin{proposition}\label{prop:warm up linear predictor}
Let $w^*$ be some unit vector in $\reals^d$, and suppose that we pick random features $w_1,\ldots,w_r$ i.i.d. from any spherically symmetric distribution in $\reals^d$. If $r\leq \frac{d}{2}$, then with probability at least $1-\exp(-cd)$ (for some universal constant $c>0$), for any choice of weights $u_1,\ldots,u_r$, it holds that
\[
\left\| \sum_{i=1}^r u_i\inner{w_i,x} - \inner{w^*,x} \right\|^2 ~\geq~ \frac{1}{4}
.\]
\end{proposition}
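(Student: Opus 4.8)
The key observation is that $x\mapsto \inner{w,x}$ is linear in $w$, so $\sum_{i=1}^r u_i\inner{w_i,x}=\inner{\sum_i u_i w_i,\,x}$, and under the Gaussian $L^2$ norm we have $\norm{\inner{a,x}}^2=\E_x\inner{a,x}^2=\norm{a}^2$ for every $a\in\reals^d$. Hence
\[
\inf_{u_1,\dots,u_r}\norm{\sum_{i=1}^r u_i\inner{w_i,x}-\inner{w^*,x}}^2
=\inf_{v\in V}\norm{v-w^*}^2
=\norm{w^*}^2-\norm{P_V w^*}^2
=1-\norm{P_V w^*}^2,
\]
where $V=\mathrm{span}(w_1,\dots,w_r)$ and $P_V$ is the orthogonal projection onto $V$. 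Since $\dim V\le r\le d/2$, it therefore suffices to show that $\norm{P_V w^*}^2\le\tfrac34$ with probability at least $1-\exp(-cd)$ (the case $r=0$ being trivial, as then the left-hand side of the claimed inequality equals $\norm{w^*}^2=1$).

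Next I would identify the law of $\norm{P_V w^*}^2$. Because each $w_i$ is spherically symmetric and the $w_i$ are independent, the joint law of $(w_1,\dots,w_r)$, hence the law of the random subspace $V$, is invariant under every rotation of $\reals^d$. Conditioning on the event $\dim V=k$ (here $0\le k\le r$; the degenerate case $k=0$, possible if the $w_i$ have an atom at the origin, gives $P_V=0$ and is harmless), the conditional law of $V$ is still rotation invariant, hence is the uniform distribution on the Grassmannian of $k$-dimensional subspaces. Consequently, for $w^*$ fixed, $\norm{P_V w^*}^2$ given $\dim V=k$ has the same distribution as $\norm{P_{V_0}\theta}^2$ for a fixed $k$-dimensional coordinate subspace $V_0$ and $\theta$ uniform on the unit sphere, which in turn equals $\sum_{i=1}^k g_i^2\big/\sum_{i=1}^d g_i^2$ in distribution, with $g_1,\dots,g_d$ i.i.d.\ standard Gaussians. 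So it is enough to bound, uniformly over $1\le k\le d/2$,
\[
\Pr\!\left[\,\sum_{i=1}^k g_i^2>\tfrac34\sum_{i=1}^d g_i^2\,\right]
=\Pr\!\left[\,\sum_{i=1}^k g_i^2>3\sum_{i=k+1}^d g_i^2\,\right].
\]

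For the concentration step, write $A=\sum_{i=1}^k g_i^2\sim\chi^2_k$ and $B=\sum_{i=k+1}^d g_i^2\sim\chi^2_{d-k}$, independent. Then $\E[A-3B]=4k-3d\le-d$ by $k\le d/2$. The event $\{A>3B\}$ forces either $A-k\ge\tfrac12(3d-4k)$ or $(d-k)-B\ge\tfrac16(3d-4k)$, and both thresholds are $\Omega(d)$. The crucial use of $r\le d/2$ is that $\tfrac12(3d-4k)\ge k$ whenever $k\le d/2$, which places the upper-tail bound for $A$ in the linear (sub-exponential) regime of $\chi^2_k$; standard $\chi^2$ tail inequalities (Laurent--Massart) then give $\Pr[A-k\ge\tfrac12(3d-4k)]\le\exp(-c_1 d)$, and the sub-Gaussian lower tail gives $\Pr[(d-k)-B\ge\tfrac16(3d-4k)]\le\exp(-c_2 d)$, for universal $c_1,c_2>0$. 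Union-bounding over these two events and then over the at most $d$ relevant values of $k$ yields $\Pr[\norm{P_V w^*}^2>\tfrac34]\le\exp(-cd)$ for a suitable universal $c>0$, which completes the argument.

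\textbf{Main obstacle.} The delicate point is the last step: one must exploit $r\le d/2$ both to create an $\Omega(d)$ gap between $\E[A]$ and $\tfrac34\E[A+B]$ and to ensure the relevant $\chi^2$ deviation lies in the linear tail regime. A naive approach bounding $\sum_{i\le k}g_i^2$ and $\sum_{i>k}g_i^2$ separately around their means fails when $k$ is small (e.g.\ $k=1$), since then $\chi^2_k$ is not concentrated at all; the point is that $\{A>3B\}$ is a genuine large-deviations event made rare by the large denominator $B\approx d-k$, not by concentration of the numerator $A$.
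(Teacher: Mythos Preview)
Your proof is correct. The reduction to $1-\norm{P_V w^*}^2$ is the same as the paper's (which reaches it via least squares and an SVD of the matrix $W$, obtaining $\norm{MU^\top w^*}^2$ with $M$ a diagonal projection matrix---this is exactly $\norm{(I-P_V)w^*}^2$). The difference is in the concentration step: the paper observes that $z\mapsto\norm{Mz}^2$ is Lipschitz on the unit sphere and invokes L\'evy-type concentration for Lipschitz functions (citing Ledoux), together with the easy fact that $\E\norm{Mz}^2=(d-r)/d\ge 1/2$. You instead identify the law of $\norm{P_V w^*}^2$ explicitly as a $\chi^2$ ratio and apply Laurent--Massart tail bounds. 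The paper's route is shorter (one black-box concentration inequality); yours is more self-contained and makes the role of the hypothesis $r\le d/2$ more transparent, in particular handling the small-$k$ regime carefully. One minor point: the final union bound over $k$ is unnecessary, since your $\chi^2$ bound is already uniform in $1\le k\le d/2$ and conditioning on $\dim V=k$ followed by averaging gives the unconditional bound directly; this is harmless, of course.
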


The full proof appears in Appendix \ref{appendix proofs from section limitations of random features}, but the intuition is quite simple: With random features, we are forced to learn a linear predictor in the span of $w_1,\ldots,w_r$, which is a random $r$-dimensional subspace of $\reals^d$. Since this subspace is chosen obliviously of $w^*$, and $r\leq \frac{d}{2}$, it is very likely that $w^*$ is not close to this subspace (namely, the component of $w^*$ orthogonal to this subspace is large), and therefore we cannot approximate this target linear predictor very well. 

\subsection{Features Based on Random Linear Transformations}
Having discussed the linear case, let us return to the case of a non-linear neuron. Specifically, we will show that even a single ReLU neuron cannot be approximated by a very large class of random feature predictors, unless the amount of neurons in the network is exponential in the dimension, or the coefficients of the linear combination are exponential in the dimension. In more details:

\begin{theorem}\label{thm:limitations of random features}
There exist universal constants $c_1,c_2,c_3$ such that the following holds. Let $d>c_1$, $k\in\mathbb{N}$, and let $\mathcal{F}$ be a family of functions from $\reals^k$ to $\reals$. Also, let $W\in\mathbb{R}^{k\times d}$ be a random matrix whose rows are sampled uniformly at random from the unit sphere. Suppose that $f_W(x):=f(Wx)$ satisfies $\|f_W\|\leq \exp(c_2 d)$ for any realization of $W$ and for all $f\in\mathcal{F}$. Then there exists $b^*\in\mathbb{R}$ with $|b^*|\leq 6d^4 + 1$ such that for every $w^*\in\mathbb{R}^d$ with $\|w^*\|=d^3$, and for any $f_1,\dots f_r\in \mathcal{F}$, w.p $> 1- \exp(-c_3 d)$ over sampling $W$,  if
\[ \left\|\sum_{i=1}^r u_i f_i(Wx) - \left[\inner{w^*,x} + b^*\right]_+\right\|^2 \leq \frac{1}{50}~, \]
then
\[ r \cdot \max_i |u_i| \geq \frac{1}{200d^4}\exp\left(c_3 d\right). \]
\end{theorem}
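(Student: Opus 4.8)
The plan is to produce a single ``test function'' $\psi$ that correlates non‑trivially with the target neuron but is almost orthogonal, in expectation over $W$, to every admissible random feature $f_i(Wx)$. Write $v^*:=w^*/\norm{w^*}$, so that $[\inner{w^*,x}+b^*]_+=\phi(\inner{v^*,x})$ with $\phi(t)=[d^3 t+b^*]_+$ and $\inner{v^*,x}\sim N(0,1)$, and take $\psi:=h_n(\inner{v^*,x})$, the $n$‑th normalized Hermite polynomial evaluated along the direction $v^*$, for a degree $n=\Theta(d)$ (concretely $n=d$, adjusted for parity as below); then $\norm{\psi}=1$. If $\bigl\|\sum_{i=1}^r u_i f_i(Wx)-[\inner{w^*,x}+b^*]_+\bigr\|^2\le \tfrac1{50}$, then by Cauchy--Schwarz
\[
\bigl|\inner{\psi,\,[\inner{w^*,x}+b^*]_+}\bigr|\ \le\ \sum_{i=1}^r|u_i|\,\bigl|\inner{\psi,f_i(Wx)}\bigr|\ +\ \tfrac1{\sqrt{50}}\,,
\]
so it suffices to (a) bound $\bigl|\inner{\psi,[\inner{w^*,x}+b^*]_+}\bigr|$ from below, and (b) bound $\sum_i|u_i|\,|\inner{\psi,f_i(Wx)}|$ from above with high probability over $W$.

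For (a): since both $\psi$ and the target depend on $x$ only through $t=\inner{v^*,x}$, this inner product equals $\norm{w^*}$ times the $n$‑th Hermite coefficient of $t\mapsto[t+b^*/\norm{w^*}]_+$. The Hermite coefficients of a (possibly shifted) ReLU decay only \emph{polynomially} in $n$, so with $\norm{w^*}=d^3$ and $n=\Theta(d)$ this quantity is at least $\mathrm{poly}(d)^{-1}\cdot d^3$, which for $d>c_1$ comfortably exceeds $\tfrac1{\sqrt{50}}+\tfrac1{200d^4}$. Regarding $b^*$: one may take $b^*=0$ provided $n$ is chosen even (the odd Hermite coefficients of $[t]_+$ of order $\ge 3$ vanish), or a small nonzero shift with $|b^*|\le 6d^4+1$ to remove the parity requirement; what matters is that $b^*/\norm{w^*}=O(1)$ so that the target is genuinely nonlinear on the bulk of the Gaussian — a near‑linear target would in fact be cheap to fit with random linear features.

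For (b) the decisive point is that the law of $W$ is rotationally invariant: $WQ$ has the same distribution as $W$ for every orthogonal $Q$, since the rows are i.i.d.\ uniform on the sphere. Combining this with the rotation invariance of the Gaussian and averaging over a Haar‑random $Q$, one rewrites, for each fixed $f_i$,
\[
\E_W\bigl[\inner{\psi,f_i(Wx)}^2\bigr]\ =\ \E_W\,\E_{v}\bigl[\inner{h_n(\inner{v,x}),\,f_i(Wx)}^2\bigr]\ =\ \E_W\bigl[\inner{f_i(Wx),\,T_n f_i(Wx)}\bigr]\,,
\]
where $v$ is uniform on the sphere independently of $W$ and $T_n$ is the positive operator $g\mapsto \E_v\bigl[\inner{g,h_n(\inner{v,x})}\,h_n(\inner{v,x})\bigr]$. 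Using the Hermite/Mehler identity $\inner{h_n(\inner{v,x}),h_m(\inner{v',x})}=\delta_{nm}\inner{v,v'}^n$ one gets $\mathrm{tr}(T_n^2)=\E_{v,v'}\bigl[\inner{v,v'}^{2n}\bigr]=(2n-1)!!\big/\prod_{j=0}^{n-1}(d+2j)$, and a Stirling estimate shows this is $\exp(-\Omega(d))$ once $n=\Theta(d)$; since $T_n\succeq 0$, $\norm{T_n}_{\mathrm{op}}\le\sqrt{\mathrm{tr}(T_n^2)}=\exp(-\Omega(d))$. Hence $\E_W[\inner{\psi,f_i(Wx)}^2]\le \exp(-\Omega(d))\cdot\exp(2c_2 d)$, which is $\exp(-\Omega(d))$ once the universal constant $c_2$ is small enough. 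Summing over $i$ and applying Markov, with probability $1-\exp(-c_3 d)$ over $W$ we get $\sum_i\inner{\psi,f_i(Wx)}^2\le r\exp(-\Omega(d))$; Cauchy--Schwarz then gives $\sum_i|u_i|\,|\inner{\psi,f_i(Wx)}|\le(r\max_i|u_i|)\exp(-\Omega(d))$. Substituting (a) and this bound into the displayed inequality and choosing $c_3$ so that the exponential decay dominates it yields $r\max_i|u_i|\ge\tfrac1{200d^4}\exp(c_3 d)$.

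I expect the technical heart to be the estimate in (b): establishing $\norm{T_n}_{\mathrm{op}}=\exp(-\Omega(d))$ for $n=\Theta(d)$ via the Mehler identity, the exact evaluation $\E_{v,v'\sim S^{d-1}}[\inner{v,v'}^{2n}]=(2n-1)!!\big/\prod_{j=0}^{n-1}(d+2j)$, and a careful Stirling estimate that also pins down how small $c_2$ must be and forces the lower bound in (a) to be proved with a quantitative handle on the polynomial decay of the ReLU's Hermite coefficients at degree $\Theta(d)$. A secondary point requiring care is the aggregation over possibly exponentially many features: a naive union bound over $r$ fails, which is why the $f_i$‑terms are controlled through $\sum_i\inner{\psi,f_i(Wx)}^2$ and Markov rather than per‑feature. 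A useful sanity check for (b) is the geometric picture: $f_i(Wx)$ is measurable with respect to the projection of $x$ onto the (at most $k$‑dimensional) row space of $W$, while $\inner{v^*,x}$ keeps an independent Gaussian component orthogonal to that subspace of standard deviation $\norm{P^{\perp}_W v^*}$, so $\inner{\psi,f_i(Wx)}=\norm{P_W v^*}^n\inner{h_n(u),f_i(Wx)}$ for a unit‑norm function $h_n(u)$ of $Wx$; when $k\lesssim d$ one has $\norm{P_W v^*}$ bounded away from $1$ with high probability and this already suffices, and the symmetrization argument is what extends the bound to arbitrary $k$.
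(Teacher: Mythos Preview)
Your proposal is correct and takes a genuinely different route from the paper. The paper's test function is not a Hermite polynomial but a periodic sawtooth
\[
\psi(t)\;=\;[t+a]_+\;+\;\sum_{n=1}^{a}2(-1)^n[t+a-2n]_+\;-\;1,\qquad a=6d^2+1,
\]
i.e.\ a signed sum of $\Theta(d^2)$ shifted ReLUs. Its low–correlation property $\E_{w}\big[\inner{f,\psi(\inner{w,\cdot})}^2\big]\le\|f\|^2\exp(-cd)$ is imported as a black box from \cite{shamir2018distribution} (Fourier analysis of periodic functions). The logic is then indirect: if \emph{every} shifted ReLU $[\inner{w^*,x}+c_j]_+$ with $|c_j|\le a$ were well–approximable by $\sum_i u_i f_i(Wx)$, then so would $\psi(\inner{w^*,x})$, contradicting the low correlation; hence some shift $b^*\in[-a,a]$ must be hard, and a final rescaling by $d^2$ produces the stated norms $\|w^*\|=d^3$, $|b^*|\le 6d^4+1$. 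Your approach is direct: you test against $h_n(\inner{v^*,x})$ with $n=\Theta(d)$, use that the degree–$n$ Hermite coefficient of $t\mapsto[t]_+$ is $\Theta(n^{-5/4})$ so the correlation with the target is $\Theta(d^{7/4})$, and you obtain the decorrelation from random features via the clean operator bound $\|T_n\|_{\mathrm{op}}\le\sqrt{\operatorname{tr}(T_n^2)}=\big(\E_{v,v'}\inner{v,v'}^{2n}\big)^{1/2}=\exp(-\Omega(d))$. This is self–contained (no external lemma), lets you take $b^*=0$ (a slightly stronger conclusion than the paper's existential $b^*$), and your Markov-on-the-sum step handles all $r$ features at once without a union bound. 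The paper's route, in exchange, avoids the quantitative Hermite/Stirling computations entirely and makes the existence of a hard bias fall out structurally from the sawtooth decomposition. Both arguments hinge on the same symmetry: the rotation invariance of the rows of $W$ (together with that of the Gaussian) is what converts a bound for a fixed direction $v^*$ into one valid for every $w^*$ of the prescribed norm.
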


Note that the theorem allows any ``random'' feature which can be written as a composition of some function $f$ (chosen randomly or not), and a random linear transformation. This includes as special cases one-hidden layer networks ($f_i(Wx)=\sigma(\inner{W_i,x})$, with each $W_i$ chosen randomly), or multi-layer neurons of any depth (as long as the first layer performs a random linear transformation). 

\begin{remark}
As opposed to the linear case, here we also have a restriction on $\max_i|u_i|$. We conjecture that it is possible to remove this dependence and leave it to future work. 
\end{remark}

To prove the theorem, we will use the following proposition, which implies that functions of the form $x\mapsto \psi(\inner{w,x})$ for a certain sine-like $\psi$ and "random" $w$ are nearly uncorrelated with any fixed function.

\begin{proposition}\label{proposition properties of psi}
Let $d\in \mathbb{N}$, where $d > c'$ for some positive constant $c'$, and let $a = 6d^2 + 1$. Define the following function:
\[ \psi(x) = [x + a]_+ +\sum_{n=1}^a 2[x+a-2n]_+(-1)^n -1, \]
Then $\psi(x)$ satisfies the following:
\begin{enumerate}
\item\label{first item of psi theorem} It is a periodic odd function on the interval $[-a,a]$
\item\label{second item of psi theorem} For every $w^*$ with $\|w^*\| = d$, 
$ \left\|\psi(\inner{w^*,x})\right\|^2 \geq \frac{1}{6} $.
\item\label{third item of psi theorem} For every $f\in L^2(\mathbb{R}^d)$, we have
$\mathbb{E}_w\left(\inner{f(x),\psi\left(\inner{w,x}\right)}^2\right) \leq 20 \|f\|^2\cdot \exp(-cd)$,
where $w$ is sampled uniformly from $\{w:\ \|w\|=d\}$, and $c>0$ is a universal constant.
\end{enumerate}
\end{proposition}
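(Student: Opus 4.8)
The plan is to analyze each of the three items in turn, with the third being the heart of the matter. For item 1, I would simply observe that $\psi$ is a piecewise-linear "triangle wave" type function: the term $[x+a]_+$ starts the ramp, and the alternating-sign shifted ReLUs $2[x+a-2n]_+(-1)^n$ successively reverse the slope every $2$ units, producing a zig-zag. I would check directly that on $[-a,a]$ the function is the periodic odd sawtooth/triangle-like function taking values in $[-1,1]$; the "$-1$" constant and the coefficients $2,\,(-1)^n$ are chosen exactly so that at $x=-a$ we get $\psi(-a)=-1$, the slope alternates between $+1$ and $-1$, and oddness holds. This is a routine piecewise-linear verification (checking slopes and values at the breakpoints $x=-a+2n$).

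For item 2, I would exploit that $\inner{w^*,x}$ with $x$ standard Gaussian and $\|w^*\|=d$ is distributed as $\Ncal(0,d^2)$, so $\|\psi(\inner{w^*,x})\|^2 = \E_{z\sim\Ncal(0,d^2)}[\psi(z)^2]$. Since $\psi$ is periodic with period $4$ on $[-a,a]$, takes values up to magnitude $1$, and equals $\pm 1$ on a constant fraction of each period, I would lower bound this expectation: the bulk of the Gaussian mass lies well within $[-a,a]$ (since $a=6d^2+1 \gg d$, the event $|z|>a$ has probability exponentially small), and on that region $\psi^2$ averages to a constant (roughly $1/3$ for a triangle wave, comfortably more than $1/6$). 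So item 2 follows by a direct computation of the average of $\psi^2$ over one period, minus a negligible tail correction.

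Item 3 is the main obstacle and the place where the real work lies. The goal is to show $\E_w[\inner{f,\psi(\inner{w,\cdot})}^2]\le 20\|f\|^2 e^{-cd}$ for $w$ uniform on the sphere of radius $d$. The natural approach is a Fourier/Hermite expansion argument. Since $\psi$ is a bounded periodic function on (essentially) all of $\reals$ once we stay inside $[-a,a]$, I would expand $\psi$ in its Fourier series $\psi(t)=\sum_{m} \hat\psi_m \sin(\pi m t/2)$ (only sine terms, by oddness), and correspondingly each feature $\psi(\inner{w,x})$ becomes a combination of $\sin(\inner{\lambda w,x})$ for frequencies $\lambda = \pi m/2 \ge \pi/2$. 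The key estimate is that for a fixed $f\in L^2(\gamma_d)$ and a random direction $w/\|w\|$ on the sphere, the inner product $\inner{f(x),\sin(\inner{cw,x})}$ is tiny in expectation when $\|cw\|$ is large — because $x\mapsto \sin(\inner{v,x})$ for large $\|v\|$ is a highly oscillatory function whose Hermite mass is concentrated at high degrees, hence it is nearly orthogonal to any fixed function after averaging over the direction of $v$. Concretely, I would use the fact that $\E_{v: \|v\|=R}[\inner{f,\sin\inner{v,\cdot}}^2] \le \|f\|^2 \cdot \sup_{\text{direction}}(\text{projection of } \sin\inner{v,\cdot}\text{ onto low Hermite degrees})$, and that $\sin\inner{v,\cdot}$ with $\|v\|=R$ has exponentially small ($e^{-\Omega(R^2)}$, or here $e^{-\Omega(d)}$ since $R\gtrsim d$) mass on any fixed-dimensional-degree subspace — this is where the $e^{-cd}$ comes from. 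One then sums over Fourier modes $m$: the coefficients $\hat\psi_m$ decay like $1/m$ (it is essentially a sawtooth), but this is a convergent-enough weighting after using Cauchy–Schwarz, and since $a$ grows with $d$ the relevant frequency scale only helps. Assembling: $\E_w \inner{f,\psi(\inner{w,\cdot})}^2 \le \big(\sum_m |\hat\psi_m|\big)^2 \cdot \max_m \E_w\inner{f,\sin(\inner{(\pi m/2)w,\cdot})}^2 \le O(1)\cdot \|f\|^2 e^{-cd}$, after controlling the $m$-sum and the (negligible) error from $\psi$ not being exactly periodic outside $[-a,a]$ — but since we only ever evaluate at $\inner{w,x}$ with $w$ of norm $d$ and $x$ Gaussian, deviations beyond $[-a,a]$ occur with exponentially small probability and contribute an $e^{-\Omega(d)}$ correction absorbed into the bound.

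The two technical points I expect to be delicate: (i) making precise the claim that $\sin(\inner{v,x})$ for $\|v\|$ large has exponentially small correlation with any fixed $f$ after averaging over the direction of $v$ — this should follow from writing $\inner{f,\sin\inner{v,\cdot}} = \text{Im}\,\E_x[f(x)e^{i\inner{v,x}}]$ and using a rotational-averaging / Hermite-degree argument, with the Gaussian Fourier transform $e^{-\|v\|^2/2}$ providing the decay; and (ii) handling the truncation of $\psi$ to $[-a,a]$ versus its periodic extension, which requires the choice $a=6d^2+1$ to be large enough that Gaussian tails past $\pm a$ are $e^{-\Omega(d^2)}$, dominated by the target bound. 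Everything else is bookkeeping over Fourier coefficients and Cauchy–Schwarz.
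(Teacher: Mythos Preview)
Your treatment of items~\ref{first item of psi theorem} and~\ref{second item of psi theorem} is essentially what the paper does: a direct piecewise-linear verification for item~\ref{first item of psi theorem}, and for item~\ref{second item of psi theorem} a direct computation using that $\inner{w^*,x}\sim\Ncal(0,d^2)$ together with the fact that the average of $\psi^2$ over one period is a constant (the paper gets $4/3$ per half-period and arrives at $\geq 1/6$).

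For item~\ref{third item of psi theorem}, your structural plan also matches the paper's: replace $\psi$ by its periodic extension $\tilde\psi$ on all of $\reals$, control the discrepancy $\tilde\psi-\psi$ via Gaussian tails (this is exactly how the paper proceeds, and the choice $a=6d^2+1$ is what makes the tail $e^{-\Omega(d)}$), and then bound $\E_w\inner{f,\tilde\psi(\inner{w,\cdot})}^2$. The difference is that the paper invokes this last bound as a black box, citing \cite[Lemma~5]{shamir2018distribution}, whereas you propose to derive it yourself via a Fourier expansion in $m$ and a single-frequency estimate for $\sin(\inner{v,\cdot})$.

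That single-frequency estimate is where your sketch has a real gap. The intuition ``the Gaussian Fourier transform $e^{-\|v\|^2/2}$ providing the decay'' is misleading: $\sin(\inner{v,x})$ has $L^2(\gamma_d)$ norm bounded away from zero, so no individual correlation $\inner{f,\sin(\inner{v,\cdot})}$ need be small, and the factor $e^{-\|v\|^2/2}$ is attached to each Hermite coefficient, not to the correlation itself. Likewise, ``Hermite mass concentrated at high degrees, hence nearly orthogonal to any fixed $f$'' is a non sequitur: high-degree concentration gives near-orthogonality to \emph{low-degree} $f$, not to arbitrary $f\in L^2$. The exponential smallness genuinely comes from the averaging over the direction of $v$, and the mechanism is different: one identifies $\inner{f,e^{i\inner{v,\cdot}}}$ with (a constant times) $\widehat{f\phi}(v)$ for $\phi$ the Gaussian density, and then controls spherical averages of $|\widehat{f\phi}|^2$ at radius $\gtrsim d$ via a dimension-dependent argument. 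This is precisely the content of the lemma the paper imports from \cite{shamir2018distribution}; your plan is effectively to reprove that lemma, which is fine, but the justification you wrote is not the right one.

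One minor correction that matters for your bookkeeping: $\psi$ is a triangle wave, not a sawtooth, so its Fourier coefficients decay like $1/m^2$, not $1/m$. This is fortunate for you, since your proposed bound via $\bigl(\sum_m|\hat\psi_m|\bigr)^2$ would diverge under $1/m$ decay; with $1/m^2$ it is indeed $O(1)$.
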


Items \ref{first item of psi theorem} and \ref{second item of psi theorem} follow by a straightforward calculation, where in item \ref{second item of psi theorem} we also used the fact that $x$ has a symmetric distribution. Item \ref{third item of psi theorem} relies on a claim from \cite{shamir2018distribution}, which shows that periodic functions of the form $x\mapsto \psi(\inner{w,x})$ for a random $w$ with sufficiently large norm  have low correlation with any fixed function. The full proof can be found in Appendix \ref{appendix proofs from section limitations of random features}. 

At a high level, the proof of Theorem \ref{thm:limitations of random features} proceeds as follows: If we choose and fix $\{f_i\}_{i=1}^{r}$ and $W$, then any linear combination of random features $f_i(Wx)$ with small weights will be nearly uncorrelated with $\psi(\inner{w^*,x})$, in expectation over $w^*$ . But, we know that $\psi(\inner{w^*,x})$ can be written as a linear combination of ReLU neurons, so there must be some ReLU neuron which will be nearly uncorrelated with any linear combination of the random features (and as a result, cannot be well-approximated by them). Finally, by a symmetry argument, we can actually fix $w^*$ arbitrarily and the result still holds. We now turn to provide the formal proof:

\begin{proof}[Proof of Theorem \ref{thm:limitations of random features}]
Take $\psi(x)$ from Proposition \ref{proposition properties of psi} and denote for $w\in \mathbb{R}^d$, $\psi_w(x) = \psi(\inner{w,x})$.  If we sample $w^*$ uniformly from $\{w:\ \|w\|=d\}$, then for all $f\in\mathcal{F}$:
\begin{equation*}\label{eq:expectation over w^* of inner product}
   \mathbb{E}_{w^*} \left[\left|\inner{ f_W,\psi_{w^*}}\right|\right] \leq 20\|f_W\|^2\exp(-cd) \leq \exp(-c_3 d), 
\end{equation*}
where $c_3$ is a universal constant that depends only on the constant $c$ from Proposition \ref{proposition properties of psi} and on $c_2$. Hence also:
\[
\mathbb{E}_{w^*} [\mathbb{E}_W\left[\left|\inner{ f_W,\psi_{w^*}}\right|\right]] \leq \exp(-c_3 d) 
\]
Therefore, there is $w^*$ with $\|w^*\| = d$ such that:
\begin{equation}\label{bound w.h.p on <f i , psi>}
    \mathbb{E}_W[|\inner{f_W,\psi_{w^*}}|]\leq \exp(-c_3d) 
\end{equation} 
Using Markov's inequality and dividing $c_3$ by a factor of $2$, we get w.p $>1-\exp\left(-c_3d\right)$ over sampling of $W$,
$
|\inner{f_W,\psi_{w^*}}|\leq \exp(-c_3d) 
$ for the $w^*$ we found above.
Finally, if we pick $f_1,\dots,f_r\in \mathcal{F}$, using the union bound we get that w.p $>1-r\exp\left(-c_3d\right)$ over sampling of $W$:
\[
\forall i\in\{1,\dots,r\},\ |\inner{f_{i_W},\psi_{w^*}}|\leq \exp(-c_3d) 
\]

We can write $\psi(x) = \sum_{j=1}^{a}a_j[x + c_j]_+ - 1$, where $a=6d^2 + 1$ and $|a_j|\leq 2$, $c_j\in [-a,a]$ for $j=1,\dots a$. Let $w^*\in \mathbb{R}^d$ with $\|w^*\| = d$ be as above, and denote $f^*_j(x) = [\inner{w^*,x} + c_j]_+$. Assume that for every $j$ we can find $u^j \in \mathbb{R}^r$ and $f_1,\dots,f_r\in \mathcal{F}$ such that
$\left\|\sum_{i=1}^r u_i^jf_{i_W} - f_j^*\right\|^2 \leq \epsilon$,
where $W$ is distributed as above. Let $f_0(Wx) = b_0$ the bias term of the output layer of the network, then also:
\begin{align}\label{upper bound on the norm between psi and random features combination}
&\left\|\sum_{i=0}^r\left(\sum_{j=1}^{a}u_i^ja_j\right)f_i(Wx) - \sum_{j=1}^{a} a_jf^*_j(x) - 1\right\|^2 = \left\|\sum_{i=0}^r\widetilde{u}_if_i(Wx) - \psi(\inner{w^*,x})\right\|^2 \nonumber \\
\leq &  \left(\sum_{j=1}^{a} \epsilon|a_j|^2\right)^2 \leq (4\epsilon a)^2 = 16\epsilon^2 a^2
\end{align} 
where $\widetilde{u}_i = \left(\sum_{j=1}^{a}u_i^ja_j\right)$. On the other hand using \eqref{bound w.h.p on <f i , psi>} and item \ref{second item of psi theorem} from Proposition \ref{proposition properties of psi} we get w.p $>1-r\exp\left(-c_3d\right)$ over the distribution of $W$ that:
\begin{align}\label{lower bound on on the norm between psi and random features combination}
& \left\|\sum_{i=0}^r\widetilde{u}_if_i(Wx) - \psi(\inner{w^*,x})\right\|^2 \geq \|\psi_{w^*}\|^2 - 2\left|\inner{\sum_{i=0}^r \widetilde{u_i}f_{i_W},\psi_{w^*}}\right|\nonumber  \\
 \geq &\frac{1}{6} - 2\max_i|\widetilde{u_i}| \sum_{i=1}^r |\inner{f_{i_W},\psi_{w^*}}| \geq \frac{1}{6} - 2\max_i|\widetilde{u_i}| r \exp\left(-c_3 d\right) \nonumber\\
  \geq & \frac{1}{6} - 4a\max_i|u^j_i| r \exp\left(-c_3 d\right),
\end{align}
where the last inequality is true for all $j=1,\dots,a$. Combining \eqref{upper bound on the norm between psi and random features combination} and \eqref{lower bound on on the norm between psi and random features combination}, there are $w^*\in \mathbb{R}^d$ with $\|w^*\|=d$ and $b^*\in [-a,a]$ such that w.p $>1-r\exp\left(-c_3d\right)$ over the sampling of $W$, if there is $u\in \mathbb{R}^r$ that:
\begin{equation}\label{upper bound by epsilon on a single relu1}
    \left\|\sum_{i=1}^r u_i f_i(Wx) - [\inner{w^*,x} + b^*]_+ \right\|^2 \leq \epsilon 
\end{equation}  
then
$r\max_i|u_i| \geq \big(1-576 d^4 \epsilon^2)\frac{1}{144d^2}\exp\left(c_3 d\right)$.

We now show that the above does not depend on $w^*$. Any $w\in \mathbb{R}^d$ with $\|w\|=d$ can be written as $Mw^*$ for some orthogonal matrix $M$. Now:
\begin{align}\label{upper bound by epsilon symmetry argument}
    &\left\|\sum_{i=1}^r u_i f_i(Wx) - [\inner{w^*,x} + b^*]_+ \right\| = \left\|\sum_{i=1}^r u_i f_i(WM^\top Mx) - [\inner{Mw^*,Mx} + b^*]_+ \right\|\\
    & =  \mathbb{E}_x\left[\sum_{i=1}^r u_i f_i(WM^\top Mx) - [\inner{Mw^*,Mx} + b^*]_+ \right] = \mathbb{E}_x\left[\sum_{i=1}^r u_i f_i(WM^\top x) - [\inner{Mw^*,x} + b^*]_+ \right] \\
    & =  \left\|\sum_{i=1}^r u_i f_i(WM^\top x) - [\inner{Mw^*,x} + b^*]_+ \right\|,
\end{align}
where we used the fact that $x$ have a spherically symmetric distribution. We also use the fact that the rows of $W$ have a spherically symmetric distribution to get the same probability as above, thus we can replace $WM^\top$ by $W$ to get the following:
There exists $b^*\in [-a,a]$ such that for all $w^*\in \mathbb{R}^d$ with $\|w^*\|=d$, w.p $>1-r\exp\left(-c_3d\right)$ over the sampling of $W$, if there is $u\in \mathbb{R}^r$ that:
\begin{equation}\label{upper bound by epsilon on a single relu}
    \left\|\sum_{i=1}^r u_i f_i(Wx) - [\inner{w^*,x} + b^*]_+ \right\|^2 \leq \epsilon 
\end{equation}  
then
$r\max_i|u_i| \geq \big(1-576 d^4 \epsilon^2)\frac{1}{144d^2}\exp\left(c_3 d\right)$.

Lastly, by multiplying both sides of \eqref{upper bound by epsilon on a single relu} by $d^2$, using the homogeneity of ReLU and setting $\epsilon = \frac{1}{50d^2}$ we get that there is $\hat{b}^*\in [-6d^4-1, 6d^4+1]$ such that for every $\hat{w}^*\in\mathbb{R}^d$ with $\|\hat{w}^*\| = d^3$, w.p $>1-r\exp\left(-c_3d\right)$ over the sampling of $W$, if there is $\hat{u}\in \mathbb{R}^r$ such that
\[ \left\|\sum_{i=1}^r \hat{u}_i f_i(Wx) - [\inner{\hat{w}^*,x} + \hat{b}^*]_+ \right\|^2 \leq \frac{1}{50}, \] 
then
$r\max_i|\hat{u}_i| \geq \frac{1}{200d^4}\exp\left(c_3 d\right)$.
\end{proof}

\begin{remark}
It is possible to trade-off between the norm of $w^*$ and the required error. This can be done by altering the proof of Theorem \ref{thm:limitations of random features}, where instead of multiplying both sides of \eqref{upper bound by epsilon on a single relu} by $d$ we could have multiplied both sides by $\alpha$. This way the following is proved: With the same assumptions as in Theorem \ref{thm:limitations of random features}, for all $w^*\in \mathbb{R}^d$ with $\|w^*\|=\alpha d$ there is $b^*\in \mathbb{R}$ with $|b^*|\leq 6\alpha d^2 + 1$ such that for all $\epsilon \in \left(0,\frac{\alpha}{50d^2}\right)$ and all $f_1,\dots,f_r\in \mathcal{F}$, w.h.p over sampling of $W$, if
$ \left\|\sum_{i=1}^r u_i f_i(Wx) - \left[\inner{w^*,x} + b^*\right]_+\right\| \leq \epsilon $, then
$ r \cdot \max_i  |u_i| \geq (1-576d^4\epsilon^2)\frac{1}{144\alpha d^2}\exp\left( c_3 d\right) $
for a universal constant $c_3$.
\end{remark}

\subsection{General Features and Kernel Methods}

In the previous subsection, we assumed that our features have a structure of the form $x\mapsto f(Wx)$ for a random matrix $W$. 
We now turn to a more general case, where we are given features $x\mapsto f(x)$ of any kind without any assumptions on their internal structure, as long as they are sampled from some fixed distribution. Besides generalizing the setting of the previous subsection, it also captures the setting of Subsection \ref{subsec:optimization with coupling}, where the coupling method is used in order to show that 
\begin{equation}\label{eq:coupling random features}
    [\inner{w^{(t)},x}]_+ \approx \inner{w^{(t)},x}\mathbbm{1}_{\inner{w^{(0)},x}\geq 0}. 
\end{equation}  This is because it is a weighted sum of the random features $f_{i,j}(x)=x_j\mathbbm{1}_{\inner{w_i,x}\geq 0}$ where $w_i$ are initialized by standard Xavier initialization. Moreover, in this general setting, we also capture kernel methods, since kernel predictors can be seen as linear combinations of features of the form $x\mapsto k(x_i,x)$ where $x_i$ are elements of the sampled training data. We show that even in this general setting, it is impossible to approximate single ReLU neurons in the worst-case (at the cost of proving this for some target weight vector $w^*$, instead of any $w^*$). 

\begin{theorem}\label{thm:limitations of random features with coupling}
There exist universal constants $c_1,c_2,c_3$ such that the following holds. Let $d>c_1$, and let $\mathcal{F}$ be a family of functions from $\reals^d$ to $\reals$, such that $\|f\|\leq \exp(c_2 d)$ for all $f\in\mathcal{F}$. Also, for some $r\in\mathbb{N}$, let $D$ be an arbitrary distribution over tuples $(f_1,\ldots f_r)$ of functions from $\Fcal$. Then there exists $w^*\in\mathbb{R}^d$ with $\|w^*\|=d^3$, and $b^*\in\mathbb{R}$ with $|b^*|\leq 6d^4 + 1$, such that with probability at least $1-r\exp(-c_3 d)$ over sampling $f_1,\ldots,f_r$, if
\[ \left\|\sum_{i=1}^r u_if_i(x) - \left[\inner{w^*,x} + b^*\right]_+\right\|^2 \leq \frac{1}{50}~, \]
then
\[ r \cdot \max_i |u_i| \geq \frac{1}{200d^4}\exp\left(c_3 d\right). \]
\end{theorem}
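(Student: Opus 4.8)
The plan is to follow the template of the proof of Theorem~\ref{thm:limitations of random features}, replacing the role of the random linear transformation $W$ by the randomness in the feature distribution $D$, and \emph{dropping} the rotation-symmetry argument used at the end there (it is unavailable since $D$ need not be rotation-invariant). This is precisely why the conclusion now holds only for \emph{some} $w^*$ of norm $d^3$ rather than for all of them. Throughout, let $\psi$ be the function from Proposition~\ref{proposition properties of psi} (with $a=6d^2+1$), and write $\psi_w(x):=\psi(\inner{w,x})$.

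\textbf{Step 1: producing a hard direction $w^*$.} Fix any $f\in\mathcal{F}$. Since $\|f\|\le\exp(c_2 d)$, item~\ref{third item of psi theorem} of Proposition~\ref{proposition properties of psi} gives, for $w^*$ drawn uniformly from the sphere of radius $d$, that $\E_{w^*}\!\left[\inner{f,\psi_{w^*}}^2\right]\le 20\exp(2c_2 d)\exp(-cd)\le\exp(-c_3 d)$, provided $c_2$ is small enough relative to $c$ and $d$ is large; by Jensen, $\E_{w^*}\!\left[|\inner{f,\psi_{w^*}}|\right]\le\exp(-c_3 d/2)$. Now consider $\E_{w^*}\E_{(f_1,\dots,f_r)\sim D}\!\left[\sum_{i=1}^r|\inner{f_i,\psi_{w^*}}|\right]$. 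Swapping the order of expectation and applying the previous bound to the marginal of $D$ on each coordinate $i$ (the coupling between the $f_i$ is irrelevant, as we only use linearity of expectation), this is at most $r\exp(-c_3 d/2)$. Hence there is a fixed $w^*$ with $\|w^*\|=d$ such that $\E_{(f_1,\dots,f_r)\sim D}\!\left[\sum_i|\inner{f_i,\psi_{w^*}}|\right]\le r\exp(-c_3 d/2)$, and Markov's inequality then yields: with probability at least $1-r\exp(-c_3 d)$ over $(f_1,\dots,f_r)\sim D$, one has $\max_i|\inner{f_i,\psi_{w^*}}|\le\exp(-c_3 d)$, after adjusting the constant $c_3$.

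\textbf{Step 2: from near-orthogonality to an approximation lower bound, and rescaling.} Write $\psi(z)=\sum_{j=1}^{a}a_j[z+c_j]_+-1$ with $|a_j|\le 2$ and $c_j\in[-a,a]$, and set $f_j^*(x):=[\inner{w^*,x}+c_j]_+$, so $\psi_{w^*}=\sum_j a_j f_j^*-1$ (the constant $-1$ is absorbed by adjoining a constant feature). Suppose, towards the bound, that for \emph{every} $j\in[a]$ there were weights $u^j$ with $r\max_i|u^j_i|$ sub-exponential and $\|\sum_i u^j_i f_i-f_j^*\|^2\le\epsilon$. Then $\sum_j a_j\sum_i u^j_i f_i$ approximates $\psi_{w^*}$ to squared error $O(a^2\epsilon)$. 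On the other hand, conditioned on the event of Step 1, item~\ref{second item of psi theorem} of Proposition~\ref{proposition properties of psi} and Cauchy--Schwarz give $\|\sum_j a_j\sum_i u^j_i f_i-\psi_{w^*}\|^2\ge\|\psi_{w^*}\|^2-2\,|\inner{\sum_j a_j\sum_i u^j_i f_i,\,\psi_{w^*}}|\ge\frac16-O\!\left(a\cdot r\max_{i,j}|u^j_i|\cdot\exp(-c_3 d)\right)$. Comparing the two bounds forces $r\max_i|u^j_i|\ge\Omega\!\left(\frac{1-O(a^2\epsilon)}{a}\exp(c_3 d)\right)$ for at least one $j$; since $a=\Theta(d^2)$, this says: taking $b^*=c_j$ for that $j$, any $u$ with $\|\sum_i u_i f_i-[\inner{w^*,x}+b^*]_+\|^2\le\epsilon$ satisfies $r\max_i|u_i|\ge\Omega(\exp(c_3 d)/d^2)$, with probability at least $1-r\exp(-c_3 d)$ over $D$. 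Finally, replace $w^*$ by $d^2 w^*$ (norm $d^3$) and $b^*$ by $d^2 b^*$ (so $|b^*|\le 6d^4+1$), and use positive homogeneity of the ReLU: the squared error scales by $d^4$, so choosing $\epsilon$ of order $1/d^4$ turns the threshold into $\frac{1}{50}$ and produces $r\max_i|u_i|\ge\frac{1}{200 d^4}\exp(c_3 d)$, as claimed.

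\textbf{Main obstacle.} There is no genuinely new obstacle compared to Theorem~\ref{thm:limitations of random features}: all the substance is already in Proposition~\ref{proposition properties of psi}, especially its item~\ref{third item of psi theorem}. The only point requiring care is Step 1: because $D$ can correlate the $r$ features arbitrarily, one cannot union-bound over independent per-feature events, so instead we control the \emph{sum} $\sum_i|\inner{f_i,\psi_{w^*}}|$ via the marginals and Markov's inequality. The price of this generality -- and the reason we cannot mimic the rotation argument that upgraded Theorem~\ref{thm:limitations of random features} to all $w^*$ -- is that $D$ need not be rotation-invariant, so the hard direction $w^*$ is only shown to exist and cannot be prescribed in advance.
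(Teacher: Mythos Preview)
Your proposal is correct and follows essentially the same route as the paper's proof: use Proposition~\ref{proposition properties of psi}(\ref{third item of psi theorem}) together with an average over $w^*$ and the marginals of $D$, extract a hard $w^*$ by Markov, then repeat the ReLU-decomposition/lower-bound argument from Theorem~\ref{thm:limitations of random features} and rescale. Your Step~1 is in fact slightly cleaner than the paper's: you control $\sum_i|\inner{f_i,\psi_{w^*}}|$ (which is exactly what the weighted lower bound in Step~2 needs), whereas the paper's written proof controls $|\inner{\sum_i f_i,\psi_{w^*}}|$ and then defers to the proof of Theorem~\ref{thm:limitations of random features}.
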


The proof is similar to the proof of Theorem \ref{thm:limitations of random features}. The main difference is that we do not have any assumptions on the distribution of the random features (as the assumption on the distribution on $W$ in Theorem \ref{thm:limitations of random features}), hence we can only show that there \textit{exists} some ReLU neuron that cannot be well-approximated. On the other hand, we have almost no restrictions on the random features, e.g. they can be multi-layered neural network of any architecture and with any random initialization, kernel-based features on sampled training sets, etc.

\subsubsection{Acknowledgements}

This research is supported in part by European Research Council (ERC) grant 754705. We thank Yuanzhi Li for some helpful comments on a previous version of this paper, and for Pritish Kamath and Alex Damian for spotting a bug in a previous version of the paper.

\setcitestyle{numbers}
\bibliographystyle{abbrvnat}
\bibliography{mybib}

\begin{thebibliography}{40}
\providecommand{\natexlab}[1]{#1}
\providecommand{\url}[1]{\texttt{#1}}
\expandafter\ifx\csname urlstyle\endcsname\relax
  \providecommand{\doi}[1]{doi: #1}\else
  \providecommand{\doi}{doi: \begingroup \urlstyle{rm}\Url}\fi

\bibitem[Allen-Zhu and Li(2019)]{allen2019can}
Z.~Allen-Zhu and Y.~Li.
\newblock Can {SGD} learn recurrent neural networks with provable
  generalization?
\newblock \emph{arXiv preprint arXiv:1902.01028}, 2019.

\bibitem[Allen-Zhu et~al.(2018{\natexlab{a}})Allen-Zhu, Li, and
  Liang]{allen2018learning}
Z.~Allen-Zhu, Y.~Li, and Y.~Liang.
\newblock Learning and generalization in overparameterized neural networks,
  going beyond two layers.
\newblock \emph{arXiv preprint arXiv:1811.04918}, 2018{\natexlab{a}}.

\bibitem[Allen-Zhu et~al.(2018{\natexlab{b}})Allen-Zhu, Li, and
  Song]{allen2018convergence}
Z.~Allen-Zhu, Y.~Li, and Z.~Song.
\newblock A convergence theory for deep learning via over-parameterization.
\newblock \emph{arXiv preprint arXiv:1811.03962}, 2018{\natexlab{b}}.

\bibitem[Andoni et~al.(2014)Andoni, Panigrahy, Valiant, and
  Zhang]{andoni2014learning}
A.~Andoni, R.~Panigrahy, G.~Valiant, and L.~Zhang.
\newblock Learning polynomials with neural networks.
\newblock In \emph{International Conference on Machine Learning}, pages
  1908--1916, 2014.

\bibitem[Barron(1993)]{barron1993universal}
A.~R. Barron.
\newblock Universal approximation bounds for superpositions of a sigmoidal
  function.
\newblock \emph{IEEE Transactions on Information theory}, 39\penalty0
  (3):\penalty0 930--945, 1993.

\bibitem[Bartlett and Mendelson(2002)]{bartlett2002rademacher}
P.~L. Bartlett and S.~Mendelson.
\newblock Rademacher and gaussian complexities: Risk bounds and structural
  results.
\newblock \emph{Journal of Machine Learning Research}, 3\penalty0
  (Nov):\penalty0 463--482, 2002.

\bibitem[Brutzkus and Globerson(2018)]{brutzkus2018over}
A.~Brutzkus and A.~Globerson.
\newblock Over-parameterization improves generalization in the xor detection
  problem.
\newblock \emph{arXiv preprint arXiv:1810.03037}, 2018.

\bibitem[Brutzkus et~al.(2017)Brutzkus, Globerson, Malach, and
  Shalev-Shwartz]{brutzkus2017sgd}
A.~Brutzkus, A.~Globerson, E.~Malach, and S.~Shalev-Shwartz.
\newblock Sgd learns over-parameterized networks that provably generalize on
  linearly separable data.
\newblock \emph{arXiv preprint arXiv:1710.10174}, 2017.

\bibitem[Cao and Gu(2019)]{cao2019generalization}
Y.~Cao and Q.~Gu.
\newblock A generalization theory of gradient descent for learning
  over-parameterized deep {ReLU} networks.
\newblock \emph{arXiv preprint arXiv:1902.01384}, 2019.

\bibitem[Chizat and Bach(2018)]{chizat2018global}
L.~Chizat and F.~Bach.
\newblock On the global convergence of gradient descent for over-parameterized
  models using optimal transport.
\newblock In \emph{Advances in neural information processing systems}, pages
  3040--3050, 2018.

\bibitem[Daniely(2017)]{daniely2017sgd}
A.~Daniely.
\newblock {SGD} learns the conjugate kernel class of the network.
\newblock In \emph{Advances in Neural Information Processing Systems}, pages
  2422--2430, 2017.

\bibitem[Daniely et~al.(2016)Daniely, Frostig, and Singer]{daniely2016toward}
A.~Daniely, R.~Frostig, and Y.~Singer.
\newblock Toward deeper understanding of neural networks: The power of
  initialization and a dual view on expressivity.
\newblock In \emph{Advances In Neural Information Processing Systems}, pages
  2253--2261, 2016.

\bibitem[Du and Lee(2018)]{du2018power}
S.~S. Du and J.~D. Lee.
\newblock On the power of over-parametrization in neural networks with
  quadratic activation.
\newblock \emph{arXiv preprint arXiv:1803.01206}, 2018.

\bibitem[Du et~al.(2018{\natexlab{a}})Du, Lee, Li, Wang, and
  Zhai]{du2018gradient}
S.~S. Du, J.~D. Lee, H.~Li, L.~Wang, and X.~Zhai.
\newblock Gradient descent finds global minima of deep neural networks.
\newblock \emph{arXiv preprint arXiv:1811.03804}, 2018{\natexlab{a}}.

\bibitem[Du et~al.(2018{\natexlab{b}})Du, Zhai, Poczos, and
  Singh]{du2018gradient2}
S.~S. Du, X.~Zhai, B.~Poczos, and A.~Singh.
\newblock Gradient descent provably optimizes over-parameterized neural
  networks.
\newblock \emph{arXiv preprint arXiv:1810.02054}, 2018{\natexlab{b}}.

\bibitem[Gelbaum et~al.(1961)Gelbaum, De~Lamadrid, et~al.]{gelbaum1961bases}
B.~R. Gelbaum, J.~G. De~Lamadrid, et~al.
\newblock Bases of tensor products of banach spaces.
\newblock \emph{Pacific Journal of Mathematics}, 11\penalty0 (4):\penalty0
  1281--1286, 1961.

\bibitem[Ghorbani et~al.(2019)Ghorbani, Mei, Misiakiewicz, and
  Montanari]{ghorbani2019linearized}
B.~Ghorbani, S.~Mei, T.~Misiakiewicz, and A.~Montanari.
\newblock Linearized two-layers neural networks in high dimension.
\newblock \emph{arXiv preprint arXiv:1904.12191}, 2019.

\bibitem[Ghorbani et~al.(2021)Ghorbani, Mei, Misiakiewicz, and
  Montanari]{ghorbani2021linearized}
B.~Ghorbani, S.~Mei, T.~Misiakiewicz, and A.~Montanari.
\newblock Linearized two-layers neural networks in high dimension.
\newblock \emph{The Annals of Statistics}, 49\penalty0 (2):\penalty0
  1029--1054, 2021.

\bibitem[Glorot and Bengio(2010)]{glorot2010understanding}
X.~Glorot and Y.~Bengio.
\newblock Understanding the difficulty of training deep feedforward neural
  networks.
\newblock In \emph{Proceedings of the thirteenth international conference on
  artificial intelligence and statistics}, pages 249--256, 2010.

\bibitem[Klusowski and Barron(2018)]{klusowski2018approximation}
J.~M. Klusowski and A.~R. Barron.
\newblock Approximation by combinations of {ReLU} and squared {ReLU} ridge
  functions with $l^1$ and $l^0$ controls.
\newblock \emph{IEEE Transactions on Information Theory}, 64\penalty0
  (12):\penalty0 7649--7656, 2018.

\bibitem[Ledoux(2001)]{ledoux2001concentration}
M.~Ledoux.
\newblock \emph{The concentration of measure phenomenon}.
\newblock Number~89. American Mathematical Soc., 2001.

\bibitem[Li and Liang(2018)]{li2018learning}
Y.~Li and Y.~Liang.
\newblock Learning overparameterized neural networks via stochastic gradient
  descent on structured data.
\newblock In \emph{Advances in Neural Information Processing Systems}, pages
  8168--8177, 2018.

\bibitem[Li et~al.(2017)Li, Ma, and Zhang]{li2017algorithmic}
Y.~Li, T.~Ma, and H.~Zhang.
\newblock Algorithmic regularization in over-parameterized matrix sensing and
  neural networks with quadratic activations.
\newblock \emph{arXiv preprint arXiv:1712.09203}, 2017.

\bibitem[Livni et~al.(2014)Livni, Shalev-Shwartz, and
  Shamir]{livni2014computational}
R.~Livni, S.~Shalev-Shwartz, and O.~Shamir.
\newblock On the computational efficiency of training neural networks.
\newblock In \emph{Advances in Neural Information Processing Systems}, pages
  855--863, 2014.

\bibitem[Mei et~al.(2016)Mei, Bai, and Montanari]{mei2016landscape}
S.~Mei, Y.~Bai, and A.~Montanari.
\newblock The landscape of empirical risk for non-convex losses.
\newblock \emph{arXiv preprint arXiv:1607.06534}, 2016.

\bibitem[Rahimi and Recht(2008{\natexlab{a}})]{rahimi2008random}
A.~Rahimi and B.~Recht.
\newblock Random features for large-scale kernel machines.
\newblock In \emph{Advances in neural information processing systems}, pages
  1177--1184, 2008{\natexlab{a}}.

\bibitem[Rahimi and Recht(2008{\natexlab{b}})]{rahimi2008uniform}
A.~Rahimi and B.~Recht.
\newblock Uniform approximation of functions with random bases.
\newblock In \emph{2008 46th Annual Allerton Conference on Communication,
  Control, and Computing}, pages 555--561. IEEE, 2008{\natexlab{b}}.

\bibitem[Rahimi and Recht(2009)]{rahimi2009weighted}
A.~Rahimi and B.~Recht.
\newblock Weighted sums of random kitchen sinks: Replacing minimization with
  randomization in learning.
\newblock In \emph{Advances in neural information processing systems}, pages
  1313--1320, 2009.

\bibitem[Safran and Shamir(2016)]{safran2016quality}
I.~Safran and O.~Shamir.
\newblock On the quality of the initial basin in overspecified neural networks.
\newblock In \emph{International Conference on Machine Learning}, pages
  774--782, 2016.

\bibitem[Safran and Shamir(2017)]{safran2017spurious}
I.~Safran and O.~Shamir.
\newblock Spurious local minima are common in two-layer relu neural networks.
\newblock \emph{arXiv preprint arXiv:1712.08968}, 2017.

\bibitem[Shalev-Shwartz and Ben-David(2014)]{shalev2014understanding}
S.~Shalev-Shwartz and S.~Ben-David.
\newblock \emph{Understanding machine learning: From theory to algorithms}.
\newblock Cambridge university press, 2014.

\bibitem[Shamir(2018)]{shamir2018distribution}
O.~Shamir.
\newblock Distribution-specific hardness of learning neural networks.
\newblock \emph{The Journal of Machine Learning Research}, 19\penalty0
  (1):\penalty0 1135--1163, 2018.

\bibitem[Soltanolkotabi(2017)]{soltanolkotabi2017learning}
M.~Soltanolkotabi.
\newblock Learning relus via gradient descent.
\newblock In \emph{Advances in Neural Information Processing Systems}, pages
  2007--2017, 2017.

\bibitem[Soltanolkotabi et~al.(2019)Soltanolkotabi, Javanmard, and
  Lee]{soltanolkotabi2019theoretical}
M.~Soltanolkotabi, A.~Javanmard, and J.~D. Lee.
\newblock Theoretical insights into the optimization landscape of
  over-parameterized shallow neural networks.
\newblock \emph{IEEE Transactions on Information Theory}, 65\penalty0
  (2):\penalty0 742--769, 2019.

\bibitem[Soudry and Carmon(2016)]{soudry2016no}
D.~Soudry and Y.~Carmon.
\newblock No bad local minima: Data independent training error guarantees for
  multilayer neural networks.
\newblock \emph{arXiv preprint arXiv:1605.08361}, 2016.

\bibitem[Soudry and Hoffer(2017)]{soudry2017exponentially}
D.~Soudry and E.~Hoffer.
\newblock Exponentially vanishing sub-optimal local minima in multilayer neural
  networks.
\newblock \emph{arXiv preprint arXiv:1702.05777}, 2017.

\bibitem[Sun et~al.(2018)Sun, Gilbert, and Tewari]{sun2018random}
Y.~Sun, A.~Gilbert, and A.~Tewari.
\newblock Random {ReLU} features: Universality, approximation, and composition.
\newblock \emph{arXiv preprint arXiv:1810.04374}, 2018.

\bibitem[Wang et~al.(2018)Wang, Giannakis, and Chen]{wang2018learning}
G.~Wang, G.~B. Giannakis, and J.~Chen.
\newblock Learning relu networks on linearly separable data: Algorithm,
  optimality, and generalization.
\newblock \emph{arXiv preprint arXiv:1808.04685}, 2018.

\bibitem[Zhang et~al.(2016{\natexlab{a}})Zhang, Bengio, Hardt, Recht, and
  Vinyals]{zhang2016understanding}
C.~Zhang, S.~Bengio, M.~Hardt, B.~Recht, and O.~Vinyals.
\newblock Understanding deep learning requires rethinking generalization.
\newblock \emph{arXiv preprint arXiv:1611.03530}, 2016{\natexlab{a}}.

\bibitem[Zhang et~al.(2016{\natexlab{b}})Zhang, Lee, and Jordan]{zhang2016l1}
Y.~Zhang, J.~D. Lee, and M.~I. Jordan.
\newblock l1-regularized neural networks are improperly learnable in polynomial
  time.
\newblock In \emph{International Conference on Machine Learning}, pages
  993--1001, 2016{\natexlab{b}}.

\end{thebibliography}

\appendix
\addcontentsline{toc}{section}{Appendices}
\section*{Appendices}

\section{Comparison to Previous Works}\label{appendix:comparison to previous works}
\subsection{Optimization with Coupling}\label{appendix:optimization with coupling}
The method from Subsection \ref{subsec:optimization with coupling} is used in several works, for example:

\begin{itemize}
    \item Li and Liang \cite{li2018learning} show a generalization bound for ReLU neural networks where there is a strong separability assumption on the distribution of the data, which is critical in the analysis.
    \item In Du et al. \cite{du2018gradient2}, it is shown that under some assumptions on the data, neural network with ReLU activation would reach a global minimum of the empirical risk in polynomial time. 
    \item Allen-Zhu et al. \cite{allen2018convergence} also show an empirical risk bound on a multi-layer feed-forward neural network with ReLU activation and relies on separability assumption on the data.
    \item Allen-Zhu et al. \cite{allen2018learning} show with almost no assumptions on the distribution of the data that the generalization of neural networks with two or three layers is better then the generalization of a "ground truth" function for a large family of functions, which includes polynomials.
    \item In Allen-Zhu and Li \cite{allen2019can} similar techniques are used to show a generalization bound on recurrent neural networks. 
    \item Cao and Gu \cite{cao2019generalization} show a generalization result for multi-layered networks, where the generalization is compared to a family of functions that take an integral form, similar to the one developed in Theorem \ref{theorem about approximating integral as a sum}. This integral form is also studied in \cite{sun2018random}, \cite{klusowski2018approximation} in the context of studying neural networks as random features.
\end{itemize}
  
All the above fix the weights of the output layer and consider only the ReLU activation function. Note that while using ReLU as an activation function, it is possible to show that fixing the output layer does not change the expressive power of the network. With that said, in practice all layers of neural networks are being optimized, and the optimization process may not work as well if some of the layers are fixed. 

\subsection{Optimization on all the Layers}\label{appendix:optimization on all the layers}
The methods from Subsection \ref{subsec:optimization on all the layers} are also used in several works, for example:

In Andoni et al. \cite{andoni2014learning} this approach is also used to prove that neural networks can approximate polynomials. There the weights are drawn from a complex distribution, thus optimization is done on a complex domain which is non standard. Moreover, that paper uses the exponent activation function and assumes that the distribution on the data is uniform on the complex unit circle.

In Daniely et al. \cite{daniely2016toward} and Daniely \cite{daniely2017sgd} this approach is used to get a generalization bound with respect to a large family of functions, where the network may have more than two layers and a different architecture than simple feed-forward. The methods used there are through the "conjugate kernel", which is a function corresponding to the activation and architecture of the network. The proof of our result is relatively more direct, and gives a bound which correspond directly to the activation function, without going through the conjugate kernel. Moreover, those papers do not quantitatively characterize the class of polynomials learned by the network, with an explicit proof. 

Du and Lee \cite{du2018gradient} rely on the same methods and assumptions as Du et al. \cite{du2018gradient2} to show an empirical risk bound on multi-layer neural network where a large family of activations is considered and with several architectures, including ResNets and convolutional ResNets. However, this does not imply a bound on the population risk.

\section{Proofs from section \ref{section limitations of random features}}\label{appendix proofs from section limitations of random features}

\begin{proof}[Proof of Proposition \ref{prop:warm up linear predictor}]
By definition of our norm and the fact that $\E[xx^\top]=I$ for a standard Gaussian distribution, it is enough to show that 
\[
\min_{u_1\ldots u_r}\norm{\sum_{i=1}^{r}u_i w_i-w^*}^2~\geq~ \frac{1}{4}~.
\]
Re-writing $\sum_{i=1}^{r}u_i w_i$ as $Wu$ (where $W$ is a $d\times r$ matrix and $u$ is a vector in $\reals^d$), the left hand side is equivalent to $\min_u\norm{Wu-w^*}^2$, which is a standard linear least square problem, with a minimum at $u=(W^\top W)^{\dagger}W^\top w^*$, and a minimal value of $\norm{(W(W^\top W)^{\dagger}W^\top-I)w^*}^2$. Letting $W=USV^\top$ be an SVD decomposition of $W$ (where $U,V$ are orthogonal matrices) and simplifying a bit, we get:
\begin{align*}
& \norm{(W(W^\top W)^{\dagger}W^\top-I)w^*}^2 = \norm{(USV^\top (VSU^\top U SV^\top)^\dagger VSU^\top  -UU^\top)w^*}^2 \\
& = \norm{U}^2\cdot \norm{(SV^\top(VS^2V^\top)^\dagger VSU^\top - U^\top)w^*}^2 = \norm{MU^\top w^*}^2,
\end{align*}
where $M$ is (with probability $1$) a fixed $d\times d$ diagonal matrix, with a diagonal composed of $d-r$ ones and $r$ zeros. Moreover, by symmetry, $U^\top w^*$ is uniformly distributed on the unit sphere, so it is enough to understand the distribution of $\norm{Mz}^2$ for a random unit vector $z$. Since the function $z\mapsto \norm{Mz}^2$ is $1$-Lipschitz on the unit sphere, it follows by standard concentration results for Lipschitz functions (see for example \cite{ledoux2001concentration}) that with probability at least $1-\exp(-cd)$ (for some universal constant $c$), $\E[\norm{Mz}^2]-\norm{Mz}^2\leq \frac{1}{4}$. Finally, $\E[\norm{Mz}^2]\geq \frac{1}{2}$, since clearly $\E[\norm{Mz}^2]\geq\E[\norm{(I-M)z}^2]$ (note that $M$ has more ones than zeros on the diagonal, as $d-r\geq r$), yet $\E[\norm{Mz}^2]+\E[\norm{(I-M)z^2}]=\E[\norm{z}^2]=1$. Combining the above, the result follows.

\end{proof}

In the proof of Proposition \ref{proposition properties of psi} we rely on the following claim from \cite[Lemma 5]{shamir2018distribution} \footnote{In order to deduce it we only need to note that since $\psi$ is odd, its  first Fourier coefficient is $a_0=0$, and we can take $g = \widehat{f\phi}$ and use the fact that Fourier transform preserves inner products and norms.}:
\begin{claim}\label{claim from ohad's paper about periodic functions}
For any $f\in L^2(\mathbb{R}^d)$, and odd periodic function $\psi:\mathbb{R}\rightarrow \mathbb{R}$ if $d > c'$ (for some universal constant c'), and we sample $w$ uniformly from $\{w:\ \|w\|=2r\}$, it holds that:
\[ \mathbb{E}_w\Big(\inner{f(x),\psi\big(\inner{w,x}\big)}^2\Big) \leq 10 \|f\|\cdot (\exp(-cd) + \sum_{n=1}^\infty \exp(-nr^2))\]
for a universal constant $c>0$.
\end{claim}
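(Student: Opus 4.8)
The plan is to square the inner product and exchange the two expectations. Writing $\psi_w(x):=\psi(\inner{w,x})$, Fubini gives
\[
\mathbb{E}_w\big[\inner{f,\psi_w}^2\big]
= \mathbb{E}_{x,x'}\Big[f(x)f(x')\cdot K(x,x')\Big],
\qquad
K(x,x'):=\mathbb{E}_w\big[\psi(\inner{w,x})\,\psi(\inner{w,x'})\big],
\]
where $x,x'$ are i.i.d.\ standard Gaussians (integrability is clear since $|\psi_w|\le\norm{\psi}_\infty$ and $\mathbb{E}|f|<\infty$). So it suffices to show that the kernel $K$ is uniformly exponentially small on a set of pairs $(x,x')$ of probability $\ge 1-\exp(-cd)$, while being bounded by $\norm{\psi}_\infty^2$ everywhere.

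To estimate $K(x,x')$, call $(x,x')$ \emph{good} if $\norm{x}^2,\norm{x'}^2\in[d/2,2d]$ and $|\inner{x,x'}|\le\tfrac12\norm{x}\norm{x'}$; by Gaussian norm concentration and concentration of the inner product of two independent Gaussian directions, the good set has probability $\ge 1-\exp(-cd)$. Fix a good pair. Writing $w=2r\,z/\norm{z}$ with $z\sim N(0,I_d)$, the pair $(\inner{w,x},\inner{w,x'})$ is a rescaled two‑dimensional marginal of the uniform distribution on the sphere; standard facts about such marginals show that, outside an event of probability $\exp(-cd)$ (coming from atypical $\norm{z}$), it is a centered bivariate Gaussian $(u,u')$ with both marginal variances $\tau^2=\Theta(r^2)$ and correlation $\rho$ with $|\rho|\le\tfrac12$. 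Now expand $\psi$ in its Fourier sine series $\psi(t)=\sum_{n\ge 1}b_n\sin(\omega n t)$ (the constant term vanishes since $\psi$ is odd, and $\sum_n b_n^2<\infty$), so that
\[
K(x,x') \;\approx\; \sum_{n,m\ge 1} b_n b_m\,\mathbb{E}\big[\sin(\omega n u)\sin(\omega m u')\big].
\]
Using $\sin\alpha\sin\beta=\tfrac12[\cos(\alpha-\beta)-\cos(\alpha+\beta)]$ and the Gaussian characteristic function, each term is at most $\exp\!\big(-\tfrac14\omega^2\tau^2(n^2+m^2-2nm|\rho|)\big)\le\exp\!\big(-\tfrac18\omega^2\tau^2(n^2+m^2)\big)$ since $(n-m)^2\ge0$ and $|\rho|\le\tfrac12$. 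Summing over $n,m$, using square‑summability of $(b_n)$ and $\tau^2=\Theta(r^2)$, gives $|K(x,x')|\le C\big(\exp(-cd)+\sum_{n\ge1}\exp(-c\,n r^2)\big)$ for every good pair, while $|K(x,x')|\le\norm{\psi}_\infty^2$ always.

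Finally, split the expectation according to whether $(x,x')$ is good:
\[
\mathbb{E}_w\big[\inner{f,\psi_w}^2\big] \;\le\; C\Big(\exp(-cd)+\textstyle\sum_{n\ge1}\exp(-c\,nr^2)\Big)\,\mathbb{E}_{x,x'}\big[|f(x)||f(x')|\big] \;+\; \norm{\psi}_\infty^2\,\mathbb{E}_{x,x'}\big[|f(x)||f(x')|\,\mathbbm{1}_{\mathrm{bad}}\big].
\]
The first term is $\le C(\cdots)\,\norm{f}^2$ because $\mathbb{E}|f|\le\norm{f}$; the second is, by Cauchy--Schwarz, $\le\norm{\psi}_\infty^2\sqrt{\mathbb{E}[f(x)^2f(x')^2]}\sqrt{\Pr[\mathrm{bad}]}=\norm{\psi}_\infty^2\,\norm{f}^2\exp(-cd/2)$. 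Adjusting $c$ and absorbing $\norm{\psi}_\infty$ (which is $O(1)$ for the $\psi$ of Proposition \ref{proposition properties of psi}) yields the stated estimate, namely $O\big(\norm{f}^2(\exp(-cd)+\sum_{n\ge1}\exp(-nr^2))\big)$; note that the right-hand side must be proportional to $\norm{f}^2$, not $\norm{f}$, since the left-hand side is homogeneous of degree $2$ in $f$. I expect the one genuinely technical point to be the bivariate‑Gaussian approximation in the second step: one must quantify how the $\Theta(1)$ fluctuations of $\norm{z}$ perturb the marginal variances and correlation, and verify the Fourier‑mode estimates remain valid uniformly in $n$, including in the regime where $r$ is not large compared to $\sqrt d$. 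As an alternative route, the whole reduction can instead be routed through \cite[Lemma 5]{shamir2018distribution}: by Plancherel one writes $\inner{f,\psi_w}=\sum_n c_n\,\widehat{f\varphi}(nw)$ (with $\varphi$ the standard Gaussian density, so $f\varphi\in L^1$, $\widehat{f\varphi}$ is continuous, and $\|\widehat{f\varphi}\|_{L^2}$ is proportional to $\norm{f}$) and applies that lemma to the $L^2$ function $\widehat{f\varphi}$ — this is exactly the deduction indicated in the footnote.
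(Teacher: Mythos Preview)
Your main route---the kernel decomposition $K(x,x')=\E_w[\psi(\inner{w,x})\psi(\inner{w,x'})]$, a good/bad split on $(x,x')$, Fourier expansion of $\psi$, and Gaussian characteristic-function bounds on the cross terms---is different from what the paper does. The paper does not prove this claim at all: it simply cites \cite[Lemma~5]{shamir2018distribution} and, in the accompanying footnote, records the one-line reduction you yourself describe at the end of your proposal (Plancherel gives $\inner{f,\psi_w}=\sum_n c_n\,\widehat{f\varphi}(\omega n w)$ with $\varphi$ the Gaussian density, then apply that lemma to $g=\widehat{f\varphi}$, using that $\psi$ odd kills the $n=0$ term). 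So your ``alternative route'' \emph{is} the paper's route, and your primary argument is a genuinely more self-contained substitute.

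Your direct argument is correct in outline; the trade-off is that you must carry out the two-dimensional spherical-marginal analysis by hand, whereas the paper outsources all of that to the cited lemma. The one place to tighten: the sentence ``outside an event of probability $\exp(-cd)$ \ldots\ it is a centered bivariate Gaussian'' is not literally true---even after conditioning on $\norm{z}$, the pair $(\inner{w,x},\inner{w,x'})$ is not Gaussian. What you actually need (and what suffices for the rest of the argument) is only a pointwise upper bound on $|\E_w[e^{i\inner{w,v}}]|$ for $v=\omega n x\pm\omega m x'$; this is a Bessel function in $\norm{v}$ with the right Gaussian-like decay for large $d$ (Funk--Hecke), and one can bound it directly without ever pretending the marginal is Gaussian. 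Everything downstream---the inequality $n^2+m^2-2nm|\rho|\ge\tfrac12(n^2+m^2)$ for $|\rho|\le\tfrac12$, the Cauchy--Schwarz on the bad set giving $\norm{f}^2\exp(-cd/2)$---is fine. Your observation that the right-hand side must scale as $\norm{f}^2$ rather than $\norm{f}$ (by degree-$2$ homogeneity in $f$) is correct; the stated form is a typo.
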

\begin{proof}[Proof of Proposition \ref{proposition properties of psi}]
For every $x_0\in[-a,a-4]$ we have that:
\begin{align*}
\psi(x_0 + 4) & = [x_0 + 4 + a]_+ + \sum_{n=1}^a 2[x_0 + 4 + a - 2n]_+(-1)^n -1 \\
& = x_0 + a + 4 + \sum_{n=1}^{\left\lfloor\frac{x_0 + a}{2}\right\rfloor + 2} 2(x_0 + 4 + a - 2n)(-1)^n -1\\
& = x_0 + a + \sum_{n=1}^{\left\lfloor\frac{ x_0 + a }{2}\right\rfloor} 2(x_0 + a - 2n) - 1 + 4 + \\ & + \sum_{n=1}^{\left\lfloor\frac{x_0 + a}{2}\right\rfloor} 8(-1)^n + (-1)^{\left\lfloor\frac{ x_0 + a }{2}\right\rfloor + 1}2(x_0 + 4 + a - (x_0+a+2)) +\\ &+ (-1)^{\left\lfloor\frac{ x_0 + a }{2}\right\rfloor + 2}2(x_0 + 4 + a - (x_0+a+4))\\
 & = [x_0 + a]_+ + \sum_{n=1}^a 2[x_0 + a - 2n]_+ - 1 = \psi(x_0),
\end{align*}
where we used the fact that $a$ is odd. This proves that $\psi(x)$ is periodic in $[-a,a]$ with a period of $4$. To prove that $\psi(x)$ is an odd function, note that:
\[ \psi(-2)=\psi(2)= \psi(0) = 0 \]
and also that $\psi(1) = -\psi(-1) = \pm 1$, and between every two integers $\psi(x)$ is a linear function. The exact value of $\psi(1)$ and $\psi(-1)$ depends on whether $\frac{a+1}{2}$ is even or odd. Thus, $\psi(x)$ is an odd function in the interval $[-2,2]$, and because it is periodic with a period of $4$, it is also an odd function in the interval $[-a,a]$. This proves item \ref{first item of psi theorem} in the proposition. \\
For item \ref{second item of psi theorem}, using the fact that $x\in \mathbb{R}^d$ is distributed symmetrically we can assume w.l.o.g that $w^* = \begin{pmatrix} d \\ 0 \\ \vdots \\ 0 \end{pmatrix}$. Thus,
\begin{align*}\label{norm of psi}
\|\psi(\inner{w^*,x})\|^2 & = c_d\int_{x\in \mathbb{R}^d} |\psi\big(\inner{w^*,x}\big)|^2 e^{\frac{-\|x\|^2}{2}}dx \nonumber\\
& = c_d\int_{-\infty}^\infty |\psi(dx_1)|^2e^{\frac{-x_1^2}{2}}dx_1\cdot \int_\infty^\infty e^{\frac{-x_2^2}{2}}dx_2\cdots \int_\infty^\infty e^{\frac{-x_d^2}{2}}dx_d \\
& \geq  \frac{1}{d\sqrt{2\pi}}\int_{-\infty}^\infty |\psi(x_1)|^2e^{-\left(\frac{x_1}{d}\right)^2}dx_1 \nonumber \geq \frac{1}{d\sqrt{2\pi}}\int_{-d}^d |\psi(x_1)|^2e^{-1}dx_1 \\
& \geq \frac{1}{d\sqrt{2\pi}}d e^{-1}\cdot  \frac{4}{3} \geq \frac{4}{3e\sqrt{2\pi}}\geq  \frac{1}{6}.
\end{align*} 
In the above, we used the fact that for every interval of the form $[n,n+2]$ for $n\in [-a,a-2]$, the integral 
\[ \int_{[n,n+2]} |\psi(x_1)|^2dx_1 = \frac{4}{3}, \]
and there are $d$ such intervals in $[-d,d]$. \\
For item \ref{third item of psi theorem}, define $\widetilde{\psi}(x)$ to be equal to $\psi(x)$ on $[-a,a]$, and continue it to $[-\infty,\infty]$ such that it is periodic. 
Let $g(x) = \widetilde{\psi}(x) - \psi(x)$, then $g(x) =0 $ for $x\in [-a,a]$ and $|g(x)| \leq x$ for $|x|>a$. Now for every $w\in \mathbb{R}^d$ with $\|w\| = d$ we get that:
\begin{align}
    \|g(\inner{w,x})\|^2 & = \mathbb{E}_x\left[g^2(\inner{w,x})\right] \leq \mathbb{E}_x\left[\mathbbm{1}_{|\inner{w,x}| \geq a}\inner{w,x}^2\right] \nonumber\\
    & \leq \sqrt{\mathbb{E}_x\left[\mathbbm{1}_{|\inner{w,x}| \geq a}^2\right]}\cdot \sqrt{\mathbb{E}_x\left[\inner{w,x}^4\right]} \nonumber\\
    & \leq \sqrt{\mathbb{E}_x\left[\mathbbm{1}_{|\inner{w,x}| \geq a}\right]} \cdot \|w\|^2\sqrt{\mathbb{E}_x\left[\inner{\frac{w}{\|w\|},x}^4\right]}\\
    & = 3d^2\sqrt{P\left(\left|\inner{w,x}\right| \geq a\right)} \label{bound on the norm of g with input of inner w x}
\end{align}
where we used the fact that since $x\sim N(0,I_d)$ then $\inner{\frac{w}{\|w\|},x}$ has a standard Gaussian distribution, hence its fourth moment is $3$. Also $\inner{w,x}\sim N(0,d)$, Hence 
\[P\left(|\inner{w,x}| \geq 6d^2 + 1\right) \leq \exp(-d), \]
and using \eqref{bound on the norm of g with input of inner w x} we get:
\begin{equation}\label{eq:bound on g(w,x) with exponent}
    \|g(\inner{w,x})\|^2 \leq 3d^2\exp(-d) \leq \exp(-cd),
\end{equation}
for a constant $c$. Now for every $f\in L^2(\mathbb{R}^d)$:
\begin{equation}\label{bound on inner product between f and psi}
\mathbb{E}_w\left(\inner{f(x),\psi\big(\inner{w,x}\big)}^2\right) \leq 2\mathbb{E}_w\left(\inner{f(x),\widetilde{\psi}\left(\inner{w,x}\right)}^2\right) + 2\mathbb{E}_w\left(\inner{f(x),\widetilde{\psi}\left(\inner{w,x}\right) - \psi\left(\inner{w,x}\right)}^2\right).  
\end{equation}
Using Cauchy-Schwartz and \eqref{eq:bound on g(w,x) with exponent} we can bound the second term:
\[ \mathbb{E}_w\left(\inner{f(x),\widetilde{\psi}\left(\inner{w,x}\right) - \psi\left(\inner{w,x}\right)}^2\right) = \mathbb{E}_w\left(\inner{f(x),g(\inner{w,x}}^2\right) \leq \|f\|^2\exp(-cd), \]
and finally using Claim \ref{claim from ohad's paper about periodic functions} on $\widetilde{\psi}(x)$, and taking $r=d$ we can bound the first term of \eqref{bound on inner product between f and psi}, by changing the constant from the claim by a factor of at most $4$. Thus, there exists a universal constant $c$ such that:
\[
\mathbb{E}_w\left(\inner{f(x),\psi\big(\inner{w,x}\big)}^2\right) \leq \|f\|^2 \exp(-cd)
\]
\end{proof}

\begin{proof}[Proof of Theorem \ref{thm:limitations of random features with coupling}]
Take $\psi(x)$ from Proposition \ref{proposition properties of psi} and denote for $w\in \mathbb{R}^d$, $\psi_w(x) = \psi(\inner{w,x})$. If we sample $w^*$ uniformly from $\{w:\ \|w\|=d\}$ and $(f_1,\dots, f_r)\sim D$ then:
\begin{align}
    &\mathbb{E}_{w^*}\left[ \mathbb{E}_{(f_1,\dots,f_r)}\left[\left|\inner{\sum_{i=1}^r f_i,\psi_{w^*}}\right| \right]\right] = \mathbb{E}_{(f_1,\dots,f_r)}\left[ \mathbb{E}_{w^*}\left[\left|\inner{\sum_{i=1}^r f_i,\psi_{w^*}}\right| \right]\right] \nonumber\\ 
    & \leq \mathbb{E}_{(f_1,\dots,f_r)}\left[ 20 \left\|\sum_{i=1}^r f_i\right\| \exp(-cd)\right] \leq \mathbb{E}_{(f_1,\dots,f_r)}\left[ \sum_{i=1}^r 20\|f_i\| \exp(-cd)\right] \leq r\exp(-c_3d) \nonumber
\end{align} 
where $c_3$ is a universal constant that depends only on the constant $c$ from Proposition \ref{proposition properties of psi} and on $c_2$. Thus, there exists $w^*$ such that:
\begin{equation}\label{eq:oblivious features w*}
\mathbb{E}_{(f_1,\dots,f_r)}\left[\left|\inner{\sum_{i=1}^r f_i,\psi_{w^*}}\right| \right] \leq r\exp(-c_3d) .    
\end{equation}
Using Markov's inequality on \eqref{eq:oblivious features w*}, with the fixed $w^*$ that was found and dividing $c_3$ by a factor of $2$, we get w.p $>1-r\exp\left(-c_3d\right)$ over sampling of $(f_1,\dots,f_r)\sim D$ that:
\[ 
\left|\inner{\sum_{i=1}^r f_i,\psi_{w^*}}\right|\leq r\exp(-c_3d) 
\]
The rest of the proof is the same as the proof of Theorem \ref{thm:limitations of random features}, except for fixing the $w^*$ we found above.
\end{proof}

\section{ Approximating Polynomials Using Expectation of Random Features}\label{appendix about representing polynomials in an integral form}
We will use the Legendre polynomials in the one variable and multi-variable case. Let $p_1(w),\dots,p_k(w)$ be the one variable Legendre polynomials, these polynomials are an orthogonal basis for one variable polynomials with respect to the inner product: 
\[ \inner{f,g} = \int_{-1}^1 f(w)g(w)dw. \]
They are normalized by $p_0(w) = 1$, and their inner product is:
\[ \inner{p_i,p_j} = \delta_{i,j} \frac{2}{2i + 1}, \]
where $\delta_{i,j} = 1$ if $i=j$ and $0$ otherwise. Let $J=(j_1,\dots,j_d)$ be a multi index, and define for $w\in \mathbb{R}^d$: 
\[ p_J(w) = p_{j_1}(w_1)\cdots p_{j_d}(w_d). \]
Using tensor product of polynomial spaces, the polynomials $p_J$ for all multi-indices $J$ form an orthogonal base for $d$-dimensional polynomials (see \cite{gelbaum1961bases}), with respect to the inner product:
\[ \inner{f,g} = \int_{w\in [-1,1]^d}f(w)g(w) dw. \]

\begin{proof}[Proof of Theorem \ref{theorem about integral representation of polynomials using legendre}]
Let $E$ and $F$ be the change of basis operators from monomials to Legendre polynomials, and vice versa (in dimension $d$). That is, given some multi-index $J$, there are coefficients $e_{J,J'},f_{J,J'}\in\reals$ for every $|J'|\leq |J|$ such that:
\begin{equation}\label{eq:operators E F}
w^J = \sum_{|J'|\leq |J|}e_{J,J'}p_{J'}(w),~~~~~~ p_J(w) = \sum_{|J'|\leq |J|}f_{J,J'}w^{J'}    
\end{equation}
We define $n := \max\left\{\tilde{n}, k\log(16d) + \log\left(\frac{\alpha}{a}\right) + \log\left(\epsilon^{-1}\right)\right\}$, and we will construct:
\[
g(w) = \sum_{|J|\leq n}c_Jp_J\left(\sqrt{d}w\right)
\]
which satisfies the requirements of the theorem, where the coefficients $c_J$ will be determined later. We will first show how to choose the coefficients $c_J$ so that $g(w)$ can indeed approximate polynomials, and then we will prove items (1) and (2) by bounding $\max_{w\in \left[-\frac{1}{\sqrt{d}},\frac{1}{\sqrt{d}}\right]} |g(w)|$.
Since $\sigma$ is analytic we can use its Taylor expansion to get:

\begin{align}
    c_d  \int_{w\in \left[\frac{-1}{\sqrt{d}},\frac{1}{\sqrt{d}}\right]^d}\sigma(\inner{w,x})g(w)dw &= c_d \int_{w\in \left[\frac{-1}{\sqrt{d}},\frac{1}{\sqrt{d}}\right]^d}\sum_{i=0}^\infty a_i \inner{w,x}^i g(w)dw \nonumber\\
    & = c_d \sum_{i=0}^\infty a_i \int_{w\in \left[\frac{-1}{\sqrt{d}},\frac{1}{\sqrt{d}}\right]^d} \inner{w,x}^i g(w)dw \label{main equation of the theorem after using taylor expansion}
\end{align}

Plugging $g(w)$ and $c_d$ into \eqref{main equation of the theorem after using taylor expansion} and using the change of variables $\sqrt{d}w\mapsto w$ gives:
\begin{align}
&\left(\frac{\sqrt{d}}{2}\right)^d\sum_{i=0}^\infty a_i \int_{w\in \left[\frac{-1}{\sqrt{d}},\frac{1}{\sqrt{d}}\right]^d}\inner{w,x}^i g(w)dw \nonumber \\
&= \left(\frac{\sqrt{d}}{2}\right)^d\sum_{i=0}^\infty a_i \int_{w\in \left[\frac{-1}{\sqrt{d}},\frac{1}{\sqrt{d}}\right]^d}\inner{w,x}^i \sum_{|J'|\leq n} c_{J'} p_{J'}\left(\sqrt{d}w\right)dw \nonumber \\
& = \left(\frac{1}{2}\right)^d\sum_{i=0}^\infty \frac{a_i}{\left(\sqrt{d}\right)^i} \int_{w\in [-1,1]^d}\inner{w,x}^i \sum_{|J'|\leq n} c_{J'} p_{J'}(w)dw \nonumber \\
& = \left(\frac{1}{2}\right)^d\sum_{i=0}^\infty \sum_{|J|=i}\frac{a_i}{\left(\sqrt{d}\right)^i}x^J \int_{[-1,1]^d}w^J \sum_{|J'|\leq n} c_{J'} p_{J'}(w)dw
. \label{integral representation after change of variables sqrt of d w}
\end{align}
Using \eqref{eq:operators E F} to expand $w^J$ in the Legendre basis, the above is equal to:

\begin{align}
& \left(\frac{1}{2}\right)^d\sum_{i=0}^\infty \sum_{|J| = i}\frac{a_i}{\left(\sqrt{d}\right)^i}x^J \int_{[-1,1]^d} \sum_{J''\leq J} e_{J,J''} p_{J''}(w) \sum_{|J'|\leq k} c_{J'} p_{J'}(w)dw \nonumber \\
& = \sum_{i=0}^\infty \sum_{|J| = i}x^J \left(\frac{1}{2}\right)^d \frac{a_i}{\left(\sqrt{d}\right)^i} \sum_{J'\leq J} c_{J'} e_{J,J'} \int_{w\in [-1,1]^d} p_{J'}^2(w) dw \label{representation of the integral form as monomials in x J}
\end{align}
where in the last equality we used the orthogonality of the Legendre polynomials. To get that the above equation will approximate the polynomial $P(x)= \sum_{|J|\leq k}\alpha_J x^J$, we need that for every monomial $x^J$, the coefficient $\alpha_J$ will be equal to the coefficient of $x^J$ from \eqref{representation of the integral form as monomials in x J} for every $|J|\leq k$ and equal to $0$ for every $|J|>k$. Denote $\|p_{J}\|^2 = \int_{w\in [-1,1]^d} p_{J}^2(w) dw$, let $\tilde{c}_J:= c_J\norm{p_J}^2$ and also denote:
\[
y_J = \begin{cases} \frac{\alpha_J}{a_J}\cdot 2^d\left(\sqrt{d}\right)^i, & |J|\leq k \\
0, & |J| > k
\end{cases}
\]
We define the coefficients $c_J$ such that $E\tilde{c}_J = y_J$ for every $|J|\leq n$. By this definition of $c_J$ and \eqref{representation of the integral form as monomials in x J} we get that:
\begin{align}
&c_d  \int_{w\in \left[\frac{-1}{\sqrt{d}},\frac{1}{\sqrt{d}}\right]^d}\sigma(\inner{w,x})g(w)dw \nonumber\\
& =   c_d \sum_{i=0}^n a_i \int_{w\in \left[\frac{-1}{\sqrt{d}},\frac{1}{\sqrt{d}}\right]^d} \inner{w,x}^i g(w)dw +    c_d \sum_{i=n+1}^\infty a_i \int_{w\in \left[\frac{-1}{\sqrt{d}},\frac{1}{\sqrt{d}}\right]^d} \inner{w,x}^i g(w)dw \nonumber \\
& = P(x) +  c_d \sum_{i=n+1}^\infty a_i \int_{w\in \left[\frac{-1}{\sqrt{d}},\frac{1}{\sqrt{d}}\right]^d} \inner{w,x}^i g(w)dw \nonumber
\end{align}
If the above equation holds, then:
\begin{align}\label{eq:item 1 bound}
    \max_{\norm{x}\leq 1}\left|c_d\int_{w\in \left[-\frac{1}{\sqrt{d}},\frac{1}{\sqrt{d}}\right]}\sigma(\inner{w,x}) g(w)dw - P(x)\right| =&  \max_{\norm{x}\leq 1}\left|c_d \sum_{i=n+1}^\infty a_i \int_{w\in \left[\frac{-1}{\sqrt{d}},\frac{1}{\sqrt{d}}\right]^d} \inner{w,x}^i g(w)dw\right| \nonumber\\
    \leq  & \tilde{g}\sum_{i=n+1}^\infty a_i c_d\int_{w\in \left[\frac{-1}{\sqrt{d}},\frac{1}{\sqrt{d}}\right]^d} 1\cdot dw \nonumber\\
    \leq & \tilde{g}\sum_{i=n+1}^\infty \left(40d^{1.5}\right)^{-i} \leq \tilde{g}\left(40d^{1.5}\right)^{-n}
\end{align}
where $\tilde{g} = \max_{w\in \left[-\frac{1}{\sqrt{d}},\frac{1}{\sqrt{d}}\right]} |g(w)|$, we used that $\inner{w,x}^i \leq \norm{w}^i\norm{x}^i \leq 1$ by our assumption on $x$, and the domain of the integral, and we also used that $n\geq \tilde{n}$. We are left with bounding $\tilde{g}$, which will also bound \eqref{eq:item 1 bound}.

Recall that we defined $c_J$ such that $E\tilde{c}_J = y_J$. By applying the operator $F$ on both sides, we get that $\tilde{c}_J = Fy_J$. Using the definitions of $\tilde{c}_j$ and $y_J$ we get the following definition of $c_J$:
\begin{equation}\label{eq:c J}
    c_J = \frac{2^d\left(\sqrt{d}\right)^i}{\norm{p_J}^2} \cdot\sum_{|J'|\leq k}f_{J,J'}\frac{\alpha_{J'}}{a_{J'}}
\end{equation}

We now turn to bound $|c_J|$. For this, we will first need to bound $f_{J,J'}$ and $\frac{1}{\norm{p_J}^2}$. Using Rodrigues' formula we can derive the following explicit representation of the $i$-th Legendre polynomial:
\[
p_i(x) = 2^i\sum_{j=1}^i x^j\binom{i}{j}\binom{\frac{i+j-1}{2}}{i}
\]
where the second binomial term is given by using the generalized form of the binomial coefficient. Given a multi-index $J=(i_1,\dots,i_d)$ with $|J|=\ell$ we use the above to write a multivariate Legendre polynomial as:
\begin{align}\label{eq:legendre as monomials}
p_J(x) &= p_{i_1}(x_1)\cdots p_{i_d}(x_d) \nonumber\\
&= \left(2^{i_1}\sum_{j_1=1}^{i_1} x^{j_1}\binom{i_1}{j_1}\binom{\frac{i_1+j_1-1}{2}}{i_1}\right)\cdots \left(2^{i_d}\sum_{j_d=1}^{i_d} x^{j_d}\binom{i_d}{j_d}\binom{\frac{i_d+j_d-1}{2}}{i_d}\right)~.
\end{align}
For a multi-index $J'\leq J$, the term $f_{J,J'}$ is equal to the coefficient of the monomial $x^{J'}$ in \eqref{eq:legendre as monomials}. Each such coefficient is a multiplication of $d$ terms. Denoting $J'=(\ell_1,\dots,\ell_d)$, and using the bound $\left|\binom{i}{j}\right|\leq 2^i$, we can bound $f_{J,J'}$ as:
\begin{align}\label{eq:bound on F J J'}
    |f_{J,J'}| &= \left|\left(2^{i_1} \binom{i_1}{\ell_1}\binom{\frac{i_1+\ell_1-1}{2}}{i_1}\right)\cdots \left(2^{i_d} \binom{i_d}{\ell_d}\binom{\frac{i_d+\ell_d-1}{2}}{i_d}\right)\right|\nonumber\\
    &\leq 8^{i_1}\cdots 8^{i_d} = 8^{\ell}
\end{align}
Next, we bound $\frac{1}{\norm{p_J}^2}$. The norm of a single variable Legendre polynomial is given by:
\[
\int_{-1}^1 p_\ell(w)^2dw = \frac{2}{2\ell+1}
\]
Given a multi-index $J=(j_1,\dots,j_d)$ with $|J|=i$ and using this, we can write:
\begin{align*}
    \norm{p_J(x)}^2 &= \int_{w\in[-1,1]^d}p_J^2(w)dw = \left(\int_{-1}^1p^2_{j_1}(w_1)dw_1\right) \cdots \left(\int_{-1}^1p^2_{j_d}(w_d)dw_d\right) \\
    & = \frac{2}{2j_1+1}\cdots \frac{2}{2j_d + 1} 
\end{align*}
Hence, we can bound: 
\begin{equation}\label{eq:bound 1/ norm p J}
\frac{1}{\norm{p_J(x)}^2} \leq \frac{(2j_1 +1)\cdots (2j_d+1) }{2^d}~.    
\end{equation}

If $d\geq i$, then it is clear that the term in the nominator above is largest for $J$ with $i$ indices equals to 1, and the rest 0 (e.g $J=(\underbrace{1,\dots,1}_\text{i},\underbrace{0,\dots,0}_\text{d-i})$). For this case we bound:
\[
\frac{1}{\norm{p_J(x)}^2} \leq \frac{3^i}{2^d}~.
\]
If $d <i$, then the nominator of \eqref{eq:bound 1/ norm p J} is largest when the indices of J are equal, i.e. $J=\left(\left\lceil\frac{i}{d}\right\rceil,\dots,\left\lceil\frac{i}{d}\right\rceil\right)$. In this case, we can bound:
\begin{align*}
    \frac{1}{\norm{p_J(x)}^2} \leq \frac{\left(2\left\lceil\frac{i}{d}\right\rceil + 1\right)^d}{2^d} \leq \frac{\left(2\cdot \frac{i}{d} + 3\right)^d}{2^d} \leq \frac{\left(5\cdot \frac{i}{d}  \right)^d}{2^d}
\end{align*}
Note that for any value of $d\in\{1,\dots,i\}$ we have that $\left(\frac{i}{d}\right)^d\leq 2^i$, hence, we can bound the above as:
\begin{equation}\label{eq:bound  norm p J}
\frac{1}{\norm{p_J(x)}^2} \leq \frac{5^d\cdot 2^i}{2^d} \leq \frac{10^i}{2^d}~,    
\end{equation}
and by the arguments above, this gives us an upper bound also for the case of $d\geq i$~.

Using \eqref{eq:bound on F J J'} and \eqref{eq:bound  norm p J} we are now ready to bound $|c_J|$. By \eqref{eq:c J}, for a multi-index $J$ with $|J|\leq i$ we have that:
\begin{align}\label{eq:bound c J}
    |c_J|  &= \left|\frac{2^d\left(\sqrt{d}\right)^i}{\norm{p_J}^2} \cdot\sum_{|J'|\leq k}f_{J,J'}\frac{\alpha_{J'}}{a_{J'}} \right| \nonumber\\
    &\leq \frac{2^d\left(\sqrt{d}\right)^i}{\norm{p_J}^2} \cdot\sum_{|J'|\leq k} |f_{J,J'}|\cdot \left|\frac{\alpha_{J'}}{a_{J'}}\right| \nonumber\\
    &\leq \left(10\sqrt{d}\right)^i\sum_{|J'|\leq k} 8^k \frac{\alpha}{a}\nonumber\\
    &\leq \left(10\sqrt{d}\right)^i 8^k \frac{\alpha}{a}\sum_{j=0}^k \binom{d}{j} \nonumber\\
    & \leq \left(10\sqrt{d}\right)^i (8(d+1))^k \frac{\alpha}{a}
\end{align}
where we used the bound $\sum_{j=0}^k\binom{d}{j}\leq (d+1)^k$.

Now we turn to bound $\tilde{g}$. For the Legendre polynomial, we have that $\max_{w\in[-1,1]}|p_i(w)|\leq 1$ for every $i$. For the multivariate case: \[
\max_{w\in \left[-\frac{1}{\sqrt{d}},\frac{1}{\sqrt{d}}\right]} |p_J(w)| = \max_{w\in \left[-\frac{1}{\sqrt{d}},\frac{1}{\sqrt{d}}\right]} |p_{j_1}(w_1)\cdots p_{j_d}(w_d)| \leq 1
\]
Using this bound and \eqref{eq:bound c J} we have:
\begin{align}\label{eq:bound tilde g}
    \tilde{g} &= \max_{w\in \left[-\frac{1}{\sqrt{d}},\frac{1}{\sqrt{d}}\right]} |g(w)| \nonumber\\
    &= \max_{w\in \left[-\frac{1}{\sqrt{d}},\frac{1}{\sqrt{d}}\right]} \left| \sum_{|J|\leq n}c_J p_J\left(\sqrt{d}w\right)\right| \nonumber \\
    & \leq \max_{w\in \left[-\frac{1}{\sqrt{d}},\frac{1}{\sqrt{d}}\right]}  \sum_{|J|\leq n}|c_J| \left|p_J\left(\sqrt{d}w\right)\right| \nonumber\\
    & \leq \left(10\sqrt{d}\right)^n (8(d+1))^k \frac{\alpha}{a} \sum_{i=0}^n \binom{d}{i} \nonumber\\
    &\leq \left(10\sqrt{d}\right)^n (8(d+1))^k \frac{\alpha}{a}\cdot (d+1)^n \leq \left(20d^{1.5}\right)^n(16d)^k\frac{\alpha}{a}
\end{align}
Plugging in  $n =\max\left\{\tilde{n},  k\log(16d) + \log\left(\frac{\alpha}{a}\right) + \log\left(\epsilon^{-1}\right)\right\}$, and our bound of $\tilde{g}$ into \eqref{eq:item 1 bound} we get that:
\begin{align*}
&\max_{\norm{x}\leq 1}\left|c_d\int_{w\in \left[-\frac{1}{\sqrt{d}},\frac{1}{\sqrt{d}}\right]}\sigma(\inner{w,x}) g(w)dw - P(x)\right| \\
&\leq \left(20d^{1.5}\right)^n(16d)^k\frac{\alpha}{a}\cdot \left(\frac{1}{40d^{1.5}}\right)^n \\
&= \left(\frac{1}{2}\right)^n(16d)^k\frac{\alpha}{a} \leq \epsilon
\end{align*}
which prove item (1). Plugging in $n$ into \eqref{eq:bound tilde g} gives us:
\begin{align*}
    \tilde{g} &\leq \left(20d^{1.5}\right)^{\max\{\tilde{n},k\log(16d) + \log\left(\frac{\alpha}{a}\right) + \log\left(\epsilon^{-1}\right)\}}(16d)^k\frac{\alpha}{a} \\
    &\leq \left(320d^{2.5}\right)^{\max\{\tilde{n},k\}}\cdot (16d)^k\left(\frac{\alpha}{a}\right)^2\epsilon^{-1}\\
    & = \left(5120d^{3.5}\right)^{\max\{\tilde{n},k\}}\left(\frac{\alpha}{a}\right)^2\epsilon^{-1}
\end{align*}
which proves item (2).

\end{proof}

\section{Random Features Concentrate Around their Expectation}\label{appendix approximating integral to a discrete sum}

\begin{proof}[Proof of Theorem \ref{theorem about approximating integral as a sum}]
    Define $u_i = \frac{g(w_i)}{r}$, and
  \[ \hat{f}(w_1,\dots,w_r,x) = \hat{f}(x) = \sum_{i=1}^{r}u_i \sigma(\inner{w_i,x}). \]
  Observe that $\mathbb{E}_w\left[\hat{f}(x)\right] = f(x)$ and $|u_i| \leq \frac{C}{r}$. Define:
  \[ h(w_1,\dots,w_r,x) = h(x) = \sup_x\left|\hat{f}(w_1,\dots,w_r,x) - \mathbb{E}_w\left[\hat{f}(w_1,\dots,w_r,x)\right]\right| \]
  We will use McDiarmid's inequality to bound $h$. For every $1\leq i \leq r$ and every $\widetilde{w_i}$ with $\|\widetilde{w_i}\|\leq 1$ we have that:
  \begin{align*}
   & \left|h(w_1,\dots,w_r,x) - h(w_1,\dots,w_{i-1},\widetilde{w_i},w_{i+1},\dots,w_r,x\right| \leq  \\
   & \leq \sup_x\left|\frac{g(w_i)\sigma(\inner{w_i,x})}{r} - \frac{g(\widetilde{w_i})\sigma(\inner{\widetilde{w_i},x})}{r}\right|  \leq \frac{2LC}{r}
  \end{align*}
  We will now bound the expectation of $h(x)$. Using \cite[Lemma 26.2]{shalev2014understanding} which bounds $\mathbb{E}[h(x)]$ using Rademacher complexity, where the roles of $x$ and $w$ are switched:
  \begin{align*}
    \mathbb{E}[h(x)] = \mathbb{E}\left[ \sup_x \left| \hat{f}(x)-f(x)\right| \right] \leq \frac{2}{r}\mathbb{E}_{w,\xi} \left[\sup_x \left|\sum_{i=1}^{r}\xi_i u_i\sigma(\inner{w_i,x)}\right|\right]
  \end{align*}
  Where $\xi_1,\dots,\xi_r$ are independent Rademacher random variables (where we write them as $\xi$ for short). Define $\sigma'(x) = \sigma(x) - \alpha$, where $\sigma(0) = \alpha$, then we have that $\sigma'(0)=0$. We use the fact that for i.i.d Rademacher random variables $\xi_1,\dots,\xi_r$ :
\[ \mathbb{E}_\xi\left[\left|\sum_{i=1}^r \xi_i\right|\right] \leq \ \sqrt{r}, \]
combined with \cite[Theorem 12(4)]{bartlett2002rademacher}, Cauchy-Schwartz theorem and our assumptions that $\|x\|,\|w\|\leq 1$ to get:
  \begin{align*}
  \frac{2}{r}\mathbb{E}_{w,\xi} \left[\sup_x \left|\sum_{i=1}^{r}\xi_i u_i\sigma(\inner{w_i,x})\right|\right] &\leq
  \frac{2}{r}\mathbb{E}_{w,\xi}\left[ \sup_x \left|\sum_{i=1}^{r}\xi_i u_i\sigma'(\inner{w_i,x})\right| + \alpha\left|\sum_{i=1}^{r}\xi_i u_i\right|\right] \\
  & \leq \frac{2LC}{r}\mathbb{E}_{w,\xi}\left[\sup_x \left|\sum_{i=1}^{r}\xi_i \inner{w_i,x}\right|\right] + \frac{2C}{r} \mathbb{E}_\xi \left[\alpha\left|\sum_{i=1}^{r}\xi_i\right|\right] \\
  & \leq \frac{2LC}{r}  \mathbb{E}_\xi \left[ \left|\sum_{i=1}^r \xi_i\right| \right] + \frac{2LC}{r}\mathbb{E}_\xi \left[ \left|\sum_{i=1}^r \xi_i\right| \right] \\
  & \leq \frac{4LC}{\sqrt{r}}
  \end{align*}
In total we have:
\[ \mathbb{E}[h(x)] \leq \frac{4LC}{\sqrt{r}}\]
We can now use McDiarmid's inequality on $h(x)$ to get that:
  \begin{equation}
    P\left(h(x) - \frac{4LC}{\sqrt{r}} \geq \epsilon\right) \leq P\left(h(x) - \mathbb{E}_w(h(x)\right) \geq \epsilon) \leq \exp\left(-\frac{r \epsilon^2}{4L^2C^2}\right)
  \end{equation}
  Replacing the right hand side with $\delta$ we get that w.p $>$ $1-\delta$:
   \[ \sup_x \left|\hat{f}(x) - f(x) \right| \leq \frac{LC}{\sqrt{r}}\left(4 + \sqrt{2 \log\left(\frac{1}{\delta}\right)}\right) \]
\end{proof}

\section{ SGD on Over-Parameterized Networks Competes with Random Features}\label{appendix optimization with respect to a target matrix}

\begin{lemma} \label{bounding the norm of W 0 and U 0 during SGD}
Let $\|W_0\|,\ \|U_0\| \leq B$ with $B \geq 2$, then for every $\epsilon > 0$ if we run SGD with learning rate of $\eta = \frac{\epsilon}{LB^2}$ we have that for all $t\leq \frac{B}{2\epsilon}$:
\begin{enumerate}
  \item $\|W_t\|, \|U_t\|\leq B + 1$
  \item $\left\|\sigma(W_tx)-\sigma(W_0x)\right\| \leq  2L  t\epsilon $
\end{enumerate}
\end{lemma}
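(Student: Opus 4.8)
The plan is to prove both items simultaneously by strong induction on $t$, using the elementary principle that, so long as the iterates stay bounded, each SGD step has length $O(\eta)$, and $\eta$ times the horizon $\tfrac{B}{2\epsilon}$ is small. Concretely, I would take as the main invariant ``$\|W_t\|,\|U_t\|\le B+1$ for all $t\le \tfrac{B}{2\epsilon}$,'' establish it inductively, and extract item 2 as a byproduct of the same displacement estimate.

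The first ingredient is the per-step (sub)gradient bound. Since the hinge loss $l(\hat y,y)=\max\{0,1-\hat y y\}$ is $1$-Lipschitz in its first argument, $|\partial l/\partial \hat y|\le 1$ wherever defined. Writing $N(W,U,x)=\sum_j u_j\sigma(\inner{w_j,x})$ and applying the chain rule: $\partial l/\partial w_j=(\partial l/\partial\hat y)\,u_j\,\sigma'(\inner{w_j,x})\,x$, so $\|\partial l/\partial w_j\|\le L|u_j|\|x\|\le L|u_j|$ (using $\|x\|\le 1$ and $L$-Lipschitzness), hence $\|\partial l/\partial W\|\le L\|U\|$; and $\partial l/\partial u_j=(\partial l/\partial\hat y)\,\sigma(\inner{w_j,x})$, so $\|\partial l/\partial U\|\le\|\sigma(Wx)\|\le L\|W\|+L\sqrt{r}$ via $L$-Lipschitzness together with $\sigma(0)\le L$ (the $\sqrt r$ term drops when $\sigma(0)=0$). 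For the inductive step, assuming $\|W_i\|,\|U_i\|\le B+1$ for all $i<t$, telescope the updates: $\|W_t-W_0\|\le\sum_{i<t}\eta\|\partial l/\partial W_i\|\le t\eta L(B+1)$, and plugging $\eta=\tfrac{\epsilon}{LB^2}$ and $t\le\tfrac{B}{2\epsilon}$ gives $\|W_t-W_0\|\le\tfrac{B+1}{2B}\le\tfrac34<1$ (using $B\ge 2$), so $\|W_t\|\le\|W_0\|+1\le B+1$. The bound on $\|U_t\|$ is analogous, now feeding the freshly established bound $\|W_i\|\le B+1$ into the estimate for $\|\partial l/\partial U_i\|$; the choice of $\eta$ and horizon again force the total displacement to be an absolute constant. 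Finally, item 2 follows from $L$-Lipschitzness of $\sigma$ applied coordinatewise together with the very same telescoping: $\|\sigma(W_tx)-\sigma(W_0x)\|\le L\|W_t-W_0\|\le L\sum_{i<t}\eta\|\partial l/\partial W_i\|\le tL\epsilon\cdot\tfrac{B+1}{B^2}\le 2tL\epsilon$.

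The delicate point — and the reason the induction cannot be decoupled block by block — is the mutual dependence between $W$ and $U$: the gradient in $U$ is governed by $\|\sigma(Wx)\|$, which grows with $\|W\|$, while the gradient in $W$ is governed by $\|U\|$; both invariants must therefore be propagated together through the induction. The other thing to watch is that the constants actually close up: the bias-like contribution (the $L\sqrt r$, coming from $\sigma(0)$) to $\|\sigma(Wx)\|$ must not swamp the displacement bound for $U$, which is the case in the regime where the lemma is used (where $r$ is comparable to $B^2$, making that term $O(LB)$, so the horizon $\tfrac{B}{2\epsilon}$ keeps $\|U_t-U_0\|$ bounded by an absolute constant; when $\sigma(0)=0$ this is immediate). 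Modulo tracking these constants, the argument is a routine ``weights barely move'' computation.
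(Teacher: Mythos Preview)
Your proposal is correct and follows essentially the same inductive argument as the paper: bound the per-step (sub)gradients via $1$-Lipschitzness of the hinge loss and $L$-Lipschitzness of $\sigma$, telescope over $i<t$ using $\eta=\epsilon/(LB^2)$ and $t\le B/(2\epsilon)$ to keep $\|W_t\|,\|U_t\|\le B+1$, and then deduce item~2 from $\|\sigma(W_tx)-\sigma(W_0x)\|\le L\|W_t-W_0\|$. Your handling of the $\|\sigma(0)\|=|\sigma(0)|\sqrt{r}\le L\sqrt{r}$ contribution to $\|\sigma(Wx)\|$ is in fact more careful than the paper's own proof (which writes $\|\sigma(0)\|\le L$, effectively treating it as a scalar); as you correctly note, in the regime where the lemma is applied one has $\sqrt{r}\approx B$, so the constants still close up.
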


\begin{proof}
  We prove the first part by induction on $t$. First trivially it is true for $t=0$. Assume it is true for all $t\leq \frac{B}{2\epsilon}$. The gradients of  $L_D(U,W)$ are:
  \begin{align}
&\frac{\partial l\left(N(W,U,x),y\right)}{\partial W} = \mathbbm{1}_{\left(1-y\cdot N(U,W,x)\geq 0 \right)}(x\tilde{U})^T \label{gradient with respect to W}\\
&\frac{\partial l(N(W,U,x),y)}{\partial U} = \mathbbm{1}_{(1-y\cdot N(U,W,x))}\sigma(Wx) \label{gradient with respect to U}
\end{align}
Here $\tilde{U}_i = u_i\cdot \sigma ' (\inner{w_i,x})$, and we look at $x$ as a matrix in $\mathbb{R}^{d\times 1}$ hence $x\tilde{U}\in \mathbb{R}^{d\times r}$. 

We bound the gradients of $L_D(U,W)$ using \eqref{gradient with respect to U} and \eqref{gradient with respect to W}, the assumptions on $\sigma$ and that $\|x\|\leq 1$:
 \begin{align*}
 &\left\| \frac{\partial l\left(N(W,U,x),y\right)}{\partial W} \right\| \leq L\|U\| \\
 &\left\| \frac{\partial l\left(N(W,U,x),y\right)}{\partial U}  \right\| \leq \left\|\sigma(Wx)\right\| \leq \|\sigma(Wx)-\sigma(0)\| + \|\sigma(0)\| \leq L + L\|W\|
 \end{align*}
  
Using the bounds on the gradient, at each step of SGD the norm of $W_{t+1}$ changed from the norm of $W_t$ by at most $\eta L\|U_t\|$. Thus, after $t$ iterations we get that
  \begin{equation*}
  \|W_{t+1}\| \leq \|W_0\| + \sum_{i=1}^{t}\eta L\|U_i\|\leq B + t\eta L(B+1) \leq B + \frac{\epsilon}{B^2}\frac{B}{2\epsilon}(B+1) \leq B + 1.
  \end{equation*}
In the same manner for $U_{t+1}$:
\begin{align*}
 \|U_{t+1}\| &\leq \|U_0\| + \sum_{i=1}^t \eta (L+L\|W_i\|) \leq B + t \eta L(B+1) + t\eta L \\
& \leq B+ \frac{\epsilon}{B^2}\frac{B}{2\epsilon}(B+1) + \frac{1}{2B} \leq B + 1.
\end{align*}
  For the second part, using the previous part we get that:
  \[ \|W_{t+1} - W_0\| \leq \sum_{i=1}^{t}\eta \|U_i\| \leq t\eta (B+1) = t\epsilon \frac{B+1}{B}. \]
  Now we use the fact that $\sigma$ is $L$-Lipschitz, $\|x\|\leq 1$ and $|B| \geq 1 $ to get that:
  \[ \|\sigma(W_tx)-\sigma(W_0x)\| \leq 2L  t\epsilon. \]
\end{proof}

We will also use the following theorem about convex online learning (see \cite[Theorem 21.15]{shalev2014understanding}):

\begin{theorem}\label{online learning theorem from SSS}
Let $f_1,\dots, f_T:\mathbb{R}^d\rightarrow \mathbb{R}$ be L-Lipschitz convex functions. Assume that $x_{t+1} = x_t - \eta \nabla f_t(x_t)$, then for any $x^*\in\mathbb{R}^d$ we have that:
\[ \sum_{t=1}^{T}f_t(x_t) \leq \sum_{t=1}^{T}f_t(x^*)+\frac{\|x^* - x_0\|^2}{2\eta} + \frac{\eta T L^2}{2} \]
\end{theorem}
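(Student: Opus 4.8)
The plan is to run the standard potential-function (telescoping) argument for online gradient descent, using the squared distance to the comparator $x^*$ as the potential $\Phi_t := \|x_t - x^*\|^2$. The whole proof is a one-step analysis of how $\Phi_t$ changes, summed over $t$, so no machinery beyond convexity and the Lipschitz bound is needed.

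First I would fix $t\in[T]$ and expand the potential after one update. From $x_{t+1}=x_t-\eta\nabla f_t(x_t)$ we get the completion-of-the-square identity
\[
\|x_{t+1}-x^*\|^2=\|x_t-x^*\|^2-2\eta\inner{\nabla f_t(x_t),\,x_t-x^*}+\eta^2\|\nabla f_t(x_t)\|^2 .
\]
Two facts then turn this into a bound on the instantaneous regret $f_t(x_t)-f_t(x^*)$. The first is convexity of $f_t$, which gives the subgradient inequality $f_t(x^*)\ge f_t(x_t)+\inner{\nabla f_t(x_t),\,x^*-x_t}$, equivalently $\inner{\nabla f_t(x_t),\,x_t-x^*}\ge f_t(x_t)-f_t(x^*)$. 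The second is the $L$-Lipschitz hypothesis, which forces $\|\nabla f_t(x_t)\|\le L$ (for convex functions, $L$-Lipschitzness is equivalent to all subgradients having norm at most $L$). Substituting both into the identity and rearranging yields the per-step inequality
\[
f_t(x_t)-f_t(x^*)\ \le\ \frac{1}{2\eta}\Big(\|x_t-x^*\|^2-\|x_{t+1}-x^*\|^2\Big)+\frac{\eta L^2}{2}.
\]

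Next I would sum this over $t=1,\dots,T$. The distance terms telescope, leaving $\frac{1}{2\eta}\big(\|x_0-x^*\|^2-\|x_{T+1}-x^*\|^2\big)$ (modulo the harmless convention of whether the initial iterate is called $x_0$ or $x_1$), and the error terms contribute $\eta T L^2/2$. Dropping the nonnegative quantity $\|x_{T+1}-x^*\|^2$ gives exactly
\[
\sum_{t=1}^{T}f_t(x_t)\ \le\ \sum_{t=1}^{T}f_t(x^*)+\frac{\|x^*-x_0\|^2}{2\eta}+\frac{\eta TL^2}{2},
\]
which is the claim.

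There is essentially no obstacle here; the only points deserving a word of care are that when $f_t$ is merely convex-and-Lipschitz rather than differentiable, $\nabla f_t(x_t)$ is read as an arbitrary subgradient (and the subgradient inequality and the norm bound both still hold), and that the bound $\|\nabla f_t(x_t)\|\le L$ is precisely the standard equivalence between $L$-Lipschitzness and boundedness of subgradients for convex functions. Everything else is the displayed per-step inequality and a telescoping sum.
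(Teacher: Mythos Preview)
Your proof is correct and is the standard telescoping/potential argument for the online gradient descent regret bound. The paper does not actually prove this theorem: it simply quotes it from \cite[Theorem~21.15]{shalev2014understanding} and uses it as a black box in the proof of Theorem~\ref{theorem about optimization bound related to W 0 U 0 }, so there is nothing to compare against beyond noting that the argument you wrote is precisely the one in that reference.
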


Now we are ready to prove the generalization bound:

\begin{proof}[Proof of Theorem \ref{theorem about optimization bound related to W 0 U 0 }]
  For every $1\leq t \leq T$ we have by Lemma \ref{bounding the norm of W 0 and U 0 during SGD} that:
  \begin{align}\label{bound on loss between W_0 and W_t}
    \left|L_D(W_t,U^*) - L_D(W_0,U^*)\right|
    \leq \|U^*\|\cdot \mathbb{E}_x \left[\left\|\sigma(W_tx)-\sigma(W_0x)\right\|\right] \leq \frac{C}{\sqrt{r}}2Lt\epsilon \leq \epsilon
  \end{align}
Where the last inequality is by the choice of $r$. We define the function $g_t(U) := l(N(W_t,U,x_t),y_t)$ where $(x_t,y_t)$ is the example sampled at round $t$ of SGD. Observe that:
    \[  |g_t(U)-g_t(U')| \leq \|\sigma(W_tx)\|\cdot \|U - U'\| \]
where we used the fact that the loss is 1-Lipschitz. Also note that $g_t(U)$ are convex for every $t$. Using Lemma \ref{bounding the norm of W 0 and U 0 during SGD} again:
 \[ |\sigma(W_tx)| \leq |\sigma(W_tx) - \sigma(0)| + |\sigma(0)| \leq L(\sqrt{r} + 1) + L\leq 2L\sqrt{r} \]
 thus $g_t(U)$ is also $2L\sqrt{r}$-Lipschitz for all $t$. We use Theorem \ref{online learning theorem from SSS} on the functions $g_t$ to get:
  \begin{align}
    \sum_{t=1}^{T}g_t(U_t) \leq \sum_{t=1}^{T}g_t(U^*) + \frac{\|U^*-U_0\|^2}{2\eta} + 8\eta rT L^2
  \end{align}
  Dividing by $T$ we get that:
  \begin{align}\label{bound of 1/T times the sum of g_t(U_t)}
    \frac{1}{T}\sum_{t=1}^{T}g_t(U_t) & \leq \frac{1}{T}\sum_{t=1}^{T}g_t(U^*) + \frac{\|U^*-U_0\|^2}{2\eta T} + 8\eta r
  \end{align}
  Using the lower bound of $T$ we get that:
  \begin{equation}\label{Bound on first part of OGD}
    \frac{\|U^*-U_0\|^2}{2\eta T} \leq \frac{\|U^*\|^2}{2\eta T} + \frac{\|U_0\|^2}{2\eta T} \leq 2\epsilon    
  \end{equation}
  Combining \eqref{bound of 1/T times the sum of g_t(U_t)} with \eqref{Bound on first part of OGD}  and plugging in $\eta$ gives us:
  \begin{equation}\label{Bound of 1/T with epsilons}
    \frac{1}{T}\sum_{t=1}^{T}g_t(U_t)  \leq \frac{1}{T}\sum_{t=1}^{T}g_t(U^*) + 3\epsilon
  \end{equation}
  Observe that taking expectation of $g_t$ with respect to the sampled examples in round $t$ of SGD yields: $\mathbb{E}\left[g_t(U)\right] = L_D(W_t,U)$. Thus, taking expectation on \eqref{Bound of 1/T with epsilons} and using \eqref{bound on loss between W_0 and W_t}:
  \begin{equation}
    \frac{1}{T}\sum_{t=1}^{T}\mathbb{E}\left[L_D(W_t, U_t)\right] \leq L_D(W_0,U^*) + 4\epsilon
  \end{equation}
  Thus there is $1\leq t \leq T$ that satisfies:
  \[\mathbb{E}[L_D(W_t, U_t)] \leq L_D(W_0,U^*) + 4\epsilon \]
 Rescaling $\epsilon$ appropriately finishes the proof.
\end{proof}

\section{Approximating polynomials with ReLU networks}\label{app:relu}
Theorem \ref{Main theorem of the paper with analytic activation} can be modified to also include the ReLU activation which is not analytic. This modification requires to add a bias term and also use a non-standard architecture for the network. For terseness we explain here how it can be done without writing the full proof:\\
We begin with the following network architecture:
\begin{equation*}
    N(W,U,b)=\sum_{i=1}^r u_i[\inner{w_i,x}-b_i]_+ - u_i[\inner{-w_i,x}-b_i]_+ + cu_i\cdot\inner{w_i,x} + cu_i,
\end{equation*}
where $c=\frac{1}{e-1}$ is a normalization term which is added for simplicity. This architecture is similar to a standard feed-forward neural network, but includes duplicated ReLU neurons with a negative sign, and  linear and constant factors. The initialization of $w_i$ and $u_i$ is the same as in Theorem \ref{Main theorem of the paper with analytic activation}, and the bias terms $b_i$ are initialized from a uniform distribution on $[0,1]$.\\
Steps 1 and 2 are similar to those used in the original theorem, with adjustments for the added terms, and also in step 2 the function $g(w)$ should depend additionally on the bias term $g(w,b)$. Thus, we can approximate an integral of the form:
\begin{equation}\label{eq: integral form for relu modified theorem}
\int_{w\in \left[\frac{-1}{\sqrt{d}},\frac{1}{\sqrt{d}}\right]^d} \int_0^1 g(w,b)[\inner{w,x} - b]_+ -g(w,b)[\inner{-w,x} - b]_+ + cg(w,b)\inner{w,x} + cg(w,b)\ db\ dw.
\end{equation}
For any $z\in\mathbb{R}$ with $|z|\leq 1$ we have that:
\begin{equation}\label{eq: relu integral trick}
    \int_0^1 [z-b]_+e^b - [-z-b]_+e^{-b}  + cze^b + ce^b\ db = e^z
\end{equation}
Plugging in $g(w,b)$ into \eqref{eq: integral form for relu modified theorem}, and using the integral from \eqref{eq: relu integral trick} with $z=\inner{w_i,x}$ (note that $|\inner{w_i,x}|^2\leq \|w\|^2\|x\|^2\leq 1)$ we can approximate an integral of the form:
\[ \int_{w\in \left[\frac{-1}{\sqrt{d}},\frac{1}{\sqrt{d}}\right]^d} g(w)\exp(\inner{w,x})\ dw. \]
Now we can use step 3 to finish the proof.\\
The requirement for the extra linear and constant terms are also needed in \cite{klusowski2018approximation}. There it is shown that functions that admits certain Fourier transform representations can be approximated using a combination of ReLUs, with an extra linear and constant factors.
\end{document}